\documentclass{article}
\usepackage{PRIMEarxiv}
\usepackage{setspace}
\usepackage{algpseudocode}
\usepackage{float}
\usepackage[numbers]{natbib}
\usepackage{algorithm}
\usepackage{enumitem}
\usepackage{xspace} 
\usepackage{tikz}
\usetikzlibrary{arrows.meta,positioning,fit,calc}
\usepackage{booktabs}
\usepackage{siunitx}
\sisetup{separate-uncertainty=true, detect-weight=true}
\usepackage{bm}        
\usepackage{mathrsfs} 

\usepackage{amsmath,amssymb}  
\usepackage{amsthm}          
\newtheorem{theorem}{Theorem}
\newtheorem{lemma}[theorem]{Lemma}

\theoremstyle{definition}

\usepackage[indLines]{algpseudocodex}  
\usepackage{amssymb}
\usepackage[utf8]{inputenc}
\usepackage[T1]{fontenc} 
\usepackage{hyperref}
\hypersetup{hidelinks}
\usepackage[symbol]{footmisc}
\renewcommand{\thefootnote}{\fnsymbol{footnote}}
\usepackage{url}          
\usepackage{booktabs}       
\usepackage{amsfonts}       
\usepackage{nicefrac}       
\usepackage{microtype}  
\usepackage{lipsum}
\usepackage{fancyhdr}   
\usepackage{graphicx}    
\graphicspath{{media/}}  
\pagestyle{fancy}
\thispagestyle{empty}
\rhead{ \textit{ }} 
\usepackage{hyperref}
\usepackage{amsmath,amssymb,amsfonts,amsthm}
\usepackage{bm}
\usepackage{mathrsfs}
\usepackage[numbers]{natbib}
\usepackage{enumitem}
\usepackage{parskip}
\usepackage{xcolor}

\newcommand{\cF}{\mathcal{F}}
\newcommand{\cH}{\mathcal{H}}
\newcommand{\cE}{\mathcal{E}}
\newcommand{\cR}{\mathcal{R}}
\newcommand{\cD}{\mathcal{D}}
\newcommand{\cN}{\mathcal{N}}        
\newcommand{\Fclass}{\mathcal{F}_{K,M,h,P}}   

\newcommand{\E}{\mathbb{E}}
\newcommand{\N}{\mathbb{N}}
\newcommand{\R}{\mathbb{R}}

\newcommand{\Rad}{\mathfrak{R}}      
\newcommand{\wtO}[1]{\widetilde{\mathcal{O}}\!\bigl(#1\bigr)}


\theoremstyle{plain}

\usepackage{hyperref}
\usepackage{amsfonts,amsthm}
\usepackage{xcolor}


\theoremstyle{remark}


\makeatletter
\@ifpackageloaded{amsmath}{}{\usepackage{amsmath}}
\@ifpackageloaded{amssymb}{}{\usepackage{amssymb}}
\@ifpackageloaded{amsthm}{}{\usepackage{amsthm}}
\makeatother

\setcounter{equation}{0}

\providecommand{\E}{\mathbb{E}}   


\title{Anchor–MoE: A Mean-Anchored Mixture of Experts for Probabilistic Regression
}

\author{
  Baozhuo Su
  \thanks{Work is partially done at Linkconn Electronics Co., Ltd}
  \hspace{0.1em}\thanks{Corresponding Author}\\
  Institut Polytechnique de Paris \\
  \texttt{baozhuo.su@ip-paris.fr} \\
  \texttt{} \\
  \and
   \textbf{Zhengxian Qu} \\
  Aerie Intelligent Technology Corp\\
  \texttt{squ@aerieintech.com} \\
}

\begin{document}
\maketitle
\onehalfspacing
\renewcommand{\thefootnote}{\arabic{footnote}}

\begin{abstract}
Regression under uncertainty is fundamental across science and engineering. We present an \emph{Anchored Mixture of Experts} (Anchor--MoE), a model that handles both probabilistic and point regression. For simplicity, we use a tuned gradient-boosting model to furnish the \emph{anchor mean}; however, any off-the-shelf point regressor can serve as the anchor. The anchor prediction is projected into a latent space, where a learnable metric--window kernel scores locality and a soft router dispatches each sample to a small set of mixture-density-network experts; the experts produce a heteroscedastic correction and predictive variance. We train by minimizing negative log-likelihood, and on a disjoint calibration split fit a post-hoc linear map on predicted means to improve point accuracy. On the theory side, assuming a H\"older smooth regression function of order~$\alpha$ and fixed Lipschitz partition-of-unity weights with bounded overlap, we show that Anchor--MoE attains the minimax-optimal $L^2$ risk rate $\mathcal{O}\!\big(N^{-2\alpha/(2\alpha+d)}\big)$. In addition, the CRPS test generalization gap scales as $\widetilde{\mathcal{O}}\!\Big(\sqrt{(\log(Mh)+P+K)/N}\Big)$; it is logarithmic in $Mh$ and scales as the square root in $P$ and $K$. Under bounded-overlap routing, $K$ can be replaced by $k$, and any dependence on a latent dimension is absorbed into $P$. Under uniformly bounded means and variances, an analogous $\widetilde{\mathcal{O}}\!\big(\sqrt{(\log(Mh)+P+K)/N}\big)$ scaling holds for the test NLL up to constants. Empirically, across standard UCI regressions, Anchor--MoE consistently matches or surpasses the strong NGBoost baseline in RMSE and NLL; on several datasets it achieves new state-of-the-art probabilistic regression results on our benchmark suite. Code is available at \url{https://github.com/BaozhuoSU/Probabilistic_Regression}.
\end{abstract}

\keywords{Probabilistic Regression \and Mixture of Experts \and Uncertainty Estimation}

\section*{1. Introduction}
\noindent
Regression is a cornerstone of machine learning: given covariates $\mathbf{X}$ and a real-valued response $Y$, the goal under mean-squared-error loss is to estimate the conditional expectation $f^\star(x)=\mathbb{E}[Y \mid \mathbf{X}=x]$, which is the population risk minimizer. Regression methods are ubiquitous in modern research, powering applications from climate forecasting~\cite{weatherForcasting} and protein engineering~\cite{proteinEngineering} to chronic disease prognosis~\cite{chronicDisease}.
\par
Most machine learning approaches cast regression as learning a deterministic mapping and optimize mean-squared error, effectively estimating \(\mathbb{E}[\mathbf{Y}\mid \mathbf{X}]\). However, Kendall and Gal~\cite{kendall2017uncertainties} show that explicitly modeling the full predictive distribution, especially heteroscedastic noise—can improve point accuracy by weighting residuals with learned uncertainty. In this probabilistic regression view we learn \(p(\mathbf{Y}\mid \mathbf{X})\) rather than only its mean, enabling calibrated uncertainty quantification and better downstream decisions (e.g., financial risk management), with strong empirical performance~\cite{finance}.
\par
Building on these practical benefits, a range of probabilistic regression families has been proposed, including Bayesian/uncertainty-aware neural approaches~\cite{DLRegression,kendall2017uncertainties}, tree-based probabilistic ensembles~\cite{treeBased}, and distributional generalized additive models (GAMLSS)~\cite{generalized}. While all aim to return full predictive distributions, they come with different trade-offs: deep and ensemble methods can be computationally intensive and often reduce interpretability; GAMLSS requires distributional and link-function specification and can be challenging to scale in very high-dimensional settings; and on some datasets, certain probabilistic models may prioritize calibration over point accuracy.
\par
Several recent works have sought to address these limitations. 
Hu et al.~\cite{cost} propose a neural architecture that outputs a full predictive density in a single forward pass, substantially reducing computation for deep probabilistic models. 
Zhang et al.~\cite{highDimension} develop an \emph{Improved Deep Mixture Density Network} (IDMDN) for regional wind-power probabilistic forecasting across multiple wind farms, demonstrating robust accuracy in high-dimensional settings. 
R\"ugamer et al.~\cite{flexibility} blend classical structured statistical effects with deep neural networks via semi-structured distributional regression, enabling flexible modeling that accommodates both tabular and image data. 
Finally, Vicario et al.~\cite{explanability} present an uncertainty-aware deep-learning pipeline that assigns reliability scores to predictions based on quantified uncertainty, enhancing interpretability in safety-critical applications. 
Collectively, these advances have helped push forward probabilistic regression and uncertainty estimation.
\par
Recently, Duan et al.~\cite{huan2020} introduced Natural Gradient Boosting (NGBoost), which fits the parameters of a chosen predictive distribution by boosting decision–tree base learners with natural–gradient updates. NGBoost is competitive on many tabular benchmarks with relatively little tuning, making it simple to deploy. However, several limitations arise in regression settings. First, NGBoost requires the user to pre–specify a parametric base distribution (Normal by default), and accuracy can degrade under misspecification. Second, the original formulation is univariate; for multivariate targets one must either train separate models or adopt an extension that models joint uncertainty. While \cite{multivariate} extend NGBoost to multivariate outputs by learning a joint distribution (including covariance structure), this increases computational cost and implementation complexity. Finally, beyond general boosting theory, the original work offers limited task–specific statistical guarantees.
\par 
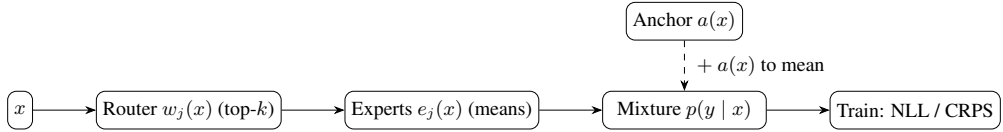
\begin{figure}[t]
\centering
\resizebox{0.80\linewidth}{!}{%
\begin{tikzpicture}[
  >=Stealth, font=\small, node distance=8mm,
  box/.style={draw, rounded corners, inner sep=3pt, minimum height=6mm}
]
  \node[box] (x)      {$x$};
  \node[box, right=10mm of x]   (router) {Router $w_j(x)$ (top-$k$)};
  \node[box, right=10mm of router, minimum width=28mm] (experts) {Experts $e_j(x)$ (means)};
  \node[box, right=10mm of experts, minimum width=26mm] (mix) {Mixture $p(y\mid x)$};
  \node[box, right=10mm of mix] (loss) {Train: NLL / CRPS};

  \node[box, above=8mm of mix] (anchor) {Anchor $a(x)$};
  
  \draw[->] (x) -- (router);
  \draw[->] (router) -- (experts);
  \draw[->] (experts) -- (mix);
  \draw[->] (mix) -- (loss);

  \draw[->, dashed] (anchor) -- node[right, xshift=2pt] {$+\ a(x)$ to mean} (mix);
\end{tikzpicture}%
}
\caption{Modulation of Anchor-MoE with different choices of anchor, router and Mixture-of-Experts.}
\label{fig:architecture}
\end{figure}
\par
To overcome these challenges, we propose \emph{Anchor--MoE}, a simple two-stage, modular architecture for probabilistic and point regression. In Stage~1, a small, tuned gradient-boosted decision trees (GBDT) model furnishes an \emph{anchor mean} $\hat\mu_{\text{GBDT}}(x)$. We $z$-score the target on the training/validation folds, append the standardized anchor as an extra feature, and in Stage~2 map the augmented features (optionally) through a linear projection to a compact latent space. A learnable metric--window kernel with trainable centers and scales scores locality; combined with a soft router, this yields sparse mixture weights over $K$ lightweight mixture-density-network (MDN) experts. Each expert predicts a small Gaussian mixture for the $z$-scored target; in the default \texttt{anchor+delta} mode, expert means provide a heteroscedastic correction to the anchor and the MDN also outputs variances. The mixture is aggregated with the learned weights to form the predictive density. Training minimizes negative log-likelihood of the $z$-scored targets with mild entropy and load-balancing regularization. To improve point accuracy without leakage, a held-out calibration split fits a post-hoc linear map on predicted means ($\mu' = a\,\mu + b$), which is then applied to the test fold; we report RMSE on calibrated means and NLL on the uncalibrated $z$-space density. The design is plug-and-play: the router can be ablated, the anchor can be removed (pure MoE), and MDN heads extend to multivariate outputs by emitting vector means and diagonal or low-rank covariances, leaving the rest of the pipeline unchanged. The architecture of Anchor-MoE is illustrated in Figure~\ref{fig:architecture}

\begin{figure}[ht]
 \centering
\includegraphics[width=0.9\columnwidth]{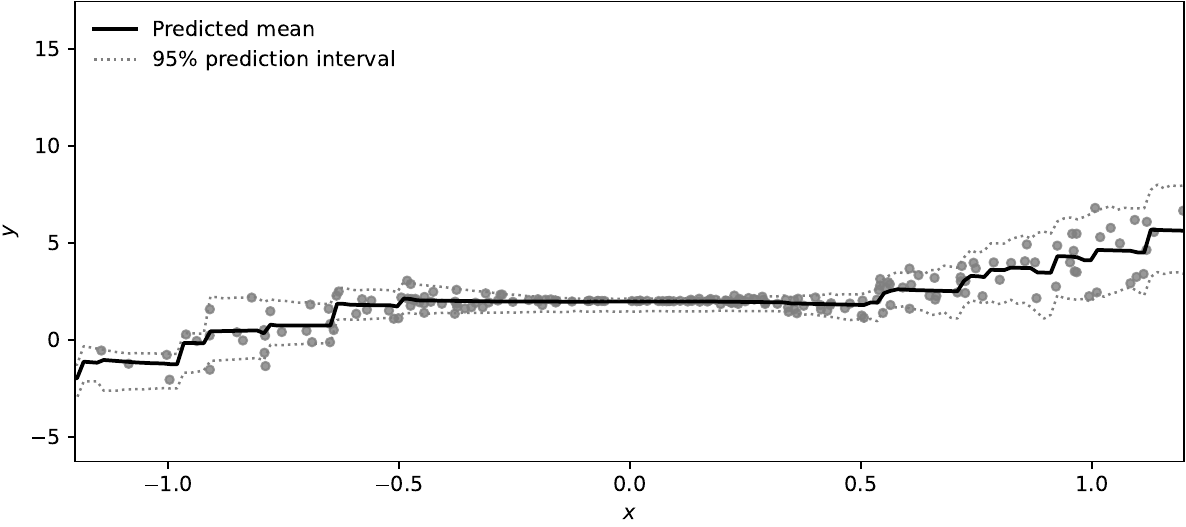}
  \caption{Interval predicted by Anchor-MoE on 1-dimensional toy probabilistic regression problem. Dots represents the data points. Black line is predicted mean and gray lines are upper and lower 95\% covered distribution predicted.}
  \label{fig:1dToy}
\end{figure}
\section*{2. Summary of Contributions}
\par
\begin{enumerate}
  \item \textbf{Anchor--MoE: a simple, modular probabilistic regressor.}
        We propose \emph{Anchor--MoE}, a two-stage architecture. A tuned gradient-boosted trees (GBDT) model furnishes an \emph{anchor mean}; we $z$-score targets, append the standardized anchor as a feature, and (optionally) project inputs to a compact latent space. A learnable metric--window kernel and a soft top-$k$ router yield sparse mixture weights over lightweight mixture-density-network (MDN) experts, which output a heteroscedastic correction (to the anchor) and variances. Training minimizes NLL on $z$-scored targets; a held-out \emph{calibration} split fits a post-hoc linear map on predicted means to improve RMSE without leakage. The design supports multivariate outputs by adapting MDN heads, and admits clean ablations (no-anchor / no-router / no-calibration).

  \item \textbf{Theoretical guarantees under mild assumptions.}
        \begin{itemize}
          \item \emph{Approximation and risk.} Under a H\"older-smooth regression function of order~$\alpha$ and fixed Lipschitz partition-of-unity weights with bounded overlap, Anchor--MoE attains the minimax-optimal $L^2$ risk rate
          $\mathcal{O}\!\big(N^{-2\alpha/(2\alpha+d)}\big)$ (up to constants).
          \item \emph{Generalization.} Using Rademacher-complexity arguments for CRPS, the population--empirical gap scales as
          $\widetilde{\mathcal O}\!\Big(\sqrt{(\log(Mh)+P+K)/N}\Big)$; under uniformly bounded means and variances, the same
          $\widetilde{\mathcal O}\!\big(\sqrt{(\log(Mh)+P+K)/N}\big)$ scaling holds for test NLL up to constants.
          Here $M$ is the number of Gaussian components per expert, $h$ the expert width, $P$ the router size (which subsumes any latent dimension), and $K$ the number of experts; under bounded-overlap (top-$k$) routing, $K$ can be replaced by $k$.
          \item \emph{High-dimensional scaling.} If data lie on a $d_0$-dimensional $C^1$ manifold (or depend on $s\ll d$ coordinates), the rate adapts to the intrinsic dimension, i.e., $N^{-2\alpha/(2\alpha+d_0)}$ (or $N^{-2\alpha/(2\alpha+s)}$).
        \end{itemize}
\end{enumerate}

\section*{3. Anchor--MoE}

In standard prediction settings the object of interest is a scalar functional such as
\(\mathbb{E}[Y\mid X=x]\).  
In probabilistic regression we instead aim to learn a full predictive law
\(P_{\Theta(x)}(y\mid x)\).
Our approach is to parameterize \(P_{\Theta(x)}\) by a \emph{locally simple, globally flexible}
mixture family whose parameters \(\Theta(x)\) are smooth functions of the input.
\par
Concretely, Anchor--MoE first forms a strong \emph{anchor mean} \(\mu_{\mathrm{a}}(x)\) using a
small gradient–boosted tree.  
The anchor is concatenated to the features and mapped to a compact latent space, from which a
metric–window router produces \emph{sparse} (soft top-\(k\)) mixture weights.  
Each activated expert is a lightweight MDN that predicts a local residual (“delta”) to the anchor
and a scale, so that the resulting predictive distribution is a mixture with means
\(\mu_{\mathrm{a}}(x)+\delta\) and heteroscedastic variances.  
\par
The next subsections detail the components: The latent projection and metric window (Sec.~3.1), the
latent metric–window and router (Sec.~3.2), the expert MDN heads and training objective
(Sec.~3.3), and the post-hoc mean calibration (Sec.~3.4).
\par
\paragraph{Notation.}
Throughout Sec.~3 we write $x\in\mathbb{R}^d$ for the (optionally anchor–augmented) input and $y$ for the z-scored target.
Let $\mu_{\mathrm{anc}}(x)$ denote the anchor mean (from GBDT), and
$z=\operatorname{LayerNorm}(W_\phi x+b_\phi)\in\mathbb{R}^D$ the latent code. The anchor is initialized as GBDT by defaut.
The metric–window has centers $c_j\in\mathbb{R}^D$ and diagonal scales $s_j\in\mathbb{R}^{D}_{+}$, producing unnormalised weights
$\tilde w_j(z)=\exp\!\big(-\tfrac12\,(z-c_j)^\top \mathrm{diag}(s_j)^{-2}(z-c_j)\big)$ and normalised
$w_j(z)=\tilde w_j(z)\big/\sum_{i=1}^K \tilde w_i(z)$.
The router forms a query $q=W_q z$ and keys $k_j$, with $g_j(z)=\mathrm{softmax}(\langle q,k_j\rangle/(\sqrt{d_r}\,\tau))$.
Combined (and top-$k$ masked) gates are $\alpha_j(z)\propto w_j(z)\,g_j(z)$ with $\sum_{j=1}^K \alpha_j(z)=1$.
Expert $j$ (an MDN) outputs mixture weights $\pi_{j,c}(x)$, means $\mu_{j,c}(x)$, and scales $\sigma_{j,c}(x)>0$ for $c=1,\dots,C$.
In \emph{anchor+delta} mode the effective mean is
$\mu^{\mathrm{eff}}_{j,c}(x)=\mu_{\mathrm{anc}}(x)+\Delta_{j,c}(x)$ (otherwise $\mu^{\mathrm{eff}}_{j,c}(x)=\mu_{j,c}(x)$).
The predictive density is
\[
p(y\mid x)=\sum_{j=1}^{K}\alpha_j(z)\sum_{c=1}^{C}\pi_{j,c}(x)\,
\mathcal N\!\big(y;\,\mu^{\mathrm{eff}}_{j,c}(x),\,\sigma_{j,c}^{2}(x)\big).
\]
Unless noted, losses are computed in z-space; RMSE is reported after inverting the z-score, while NLL is reported on the z-space density.
\par
Throughout we instantiate the anchor with a gradient-boosted trees (GBDT) model for simplicity, but any fixed point predictor $a(x)$ trained on the TR/VA split can be used in its place. The anchor value is concatenated as a feature and, in the anchor+$\Delta$ mode, experts learn a residual on top of $a(x)$.

\begin{figure}[ht]
\centering\includegraphics[width=0.9\columnwidth]{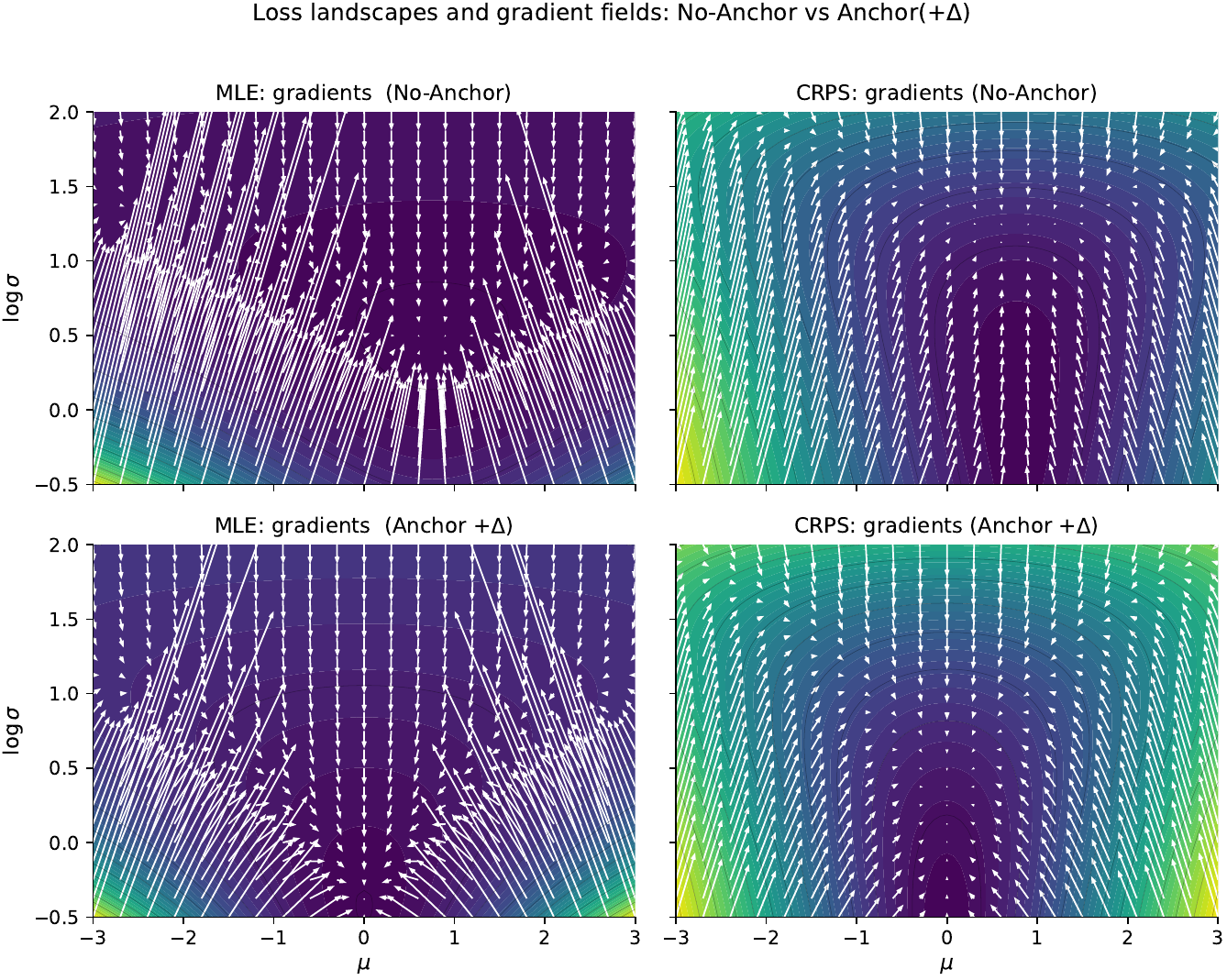}
\caption{Proper scoring rules and corresponding gradients for fitting a normal distribution $ \sim \mathcal{N}(0,1)$. For each scoring rule, the landscape of the score (colors and contours) is identical.}
\label{fig:gradientField}
\end{figure}

\subsection*{3.1 Latent Projection and Metric Window}
We map the (optionally anchor–augmented) input $x\in\mathbb{R}^d$ to a compact latent code
\[
  z \;=\; \operatorname{LayerNorm}\!\bigl(W_\phi x + b_\phi\bigr)\in\mathbb{R}^D .
\]
Locality is scored by a learnable metric–window kernel with centers $\{c_j\}_{j=1}^K$ and
diagonal scale vectors $s_j\in\mathbb{R}_{+}^{D}$ (with $S_j:=\mathrm{diag}(s_j)$):
\[
  \tilde w_j(z)
  \;=\;
  \exp\!\Bigl(-\tfrac12\,(z-c_j)^\top S_j^{-2}(z-c_j)\Bigr),
  \qquad
  w_j(z)
  \;=\;
  \frac{\tilde w_j(z)}{\sum_{i=1}^K \tilde w_i(z)}
  \;\in\; \Delta^{K-1}.
\]
For numerical stability we constrain the log–scales componentwise to
$\log s_{j,\ell}\in[\underline{\tau},\overline{\tau}]$ and add a mild scale regularizer
$\lambda_s\sum_{j=1}^K\| \log s_j\|_2^2$ to the loss.

To obtain sparse and robust gates we apply a smoothed top-$k$ mask to $w(z)$ at both training
and inference. Let $m_k(z)\in\{0,1\}^K$ be the indicator of the $k$ largest entries of $w(z)$.
With a small $\varepsilon\in(0,1)$ the final gates are
\[
  \bar w(z)
  \;=\;
  (1-\varepsilon)\,
  \frac{w(z)\odot m_k(z)}{\mathbf{1}^\top\!\bigl(w(z)\odot m_k(z)\bigr)}
  \;+\;
  \frac{\varepsilon}{k}\, m_k(z),
\]
which preserves probability mass on the top-$k$ experts while preventing zero gradients within
the active set.

\subsection*{3.2 Router}
Given the latent code $z\in\mathbb{R}^D$, we form a query
$q=W_q z\in\mathbb{R}^{d_r}$ and maintain keys
$\{k_j\}_{j=1}^K\subset\mathbb{R}^{d_r}$. We use scaled dot-product
logits (optionally cosine–normalised) with temperature $\tau>0$:
\[
  \tilde\ell_j(z)\;=\;\frac{\langle q,\,k_j\rangle}{\sqrt{d_r}}\,,\qquad
  \ell_j(z)\;=\;\tilde\ell_j(z)/\tau \quad (j=1,\dots,K).
\]
Let $w_j(z)$ be the locality weights from Section~3.1.
Instead of multiplying and re-normalising, we combine them in log-space
for numerical stability:
\[
  \alpha(z)
  \;=\;
  \mathrm{softmax}\!\Big(\,\log\!\big(w(z)\vee \epsilon_w\big)\;+\;\ell(z)\,\Big),
\]
where $(a\vee b)=\max\{a,b\}$ is applied elementwise and
$\epsilon_w\!>\!0$ is a tiny floor (e.g.\ $10^{-12}$). This is exactly
equivalent to $\alpha_j \propto w_j\,\mathrm{softmax}(\ell)_j$ but is
more stable when some $w_j$ are very small.

For specialisation and robustness we keep only the top-$k$ entries of
$\alpha$ (others set to $0$) and apply a light smoothing within the
active set (as in Section~3.1). With $m_k(z)\in\{0,1\}^K$ the
top-$k$ indicator and $\varepsilon\in(0,1)$,
\[
  \bar\alpha(z)
  \;=\;
  (1-\varepsilon)\,
  \frac{\alpha(z)\odot m_k(z)}
       {\boldsymbol{1}^\top\!\bigl(\alpha(z)\odot m_k(z)\bigr)+\varepsilon_{\mathrm{stab}}}
  \;+\;
  \frac{\varepsilon}{k}\, m_k(z),
\]
where $\varepsilon_{\mathrm{stab}}\approx 10^{-12}$.
In practice $m_k$ is treated as a non-differentiable mask (stop-gradient).

This router adds $\mathcal{O}(K d_r)$ work per example, captures
content-dependent gating, and complements the locality kernel by
suppressing far-away experts.

\subsection*{3.3 Mixture of Experts}
Given the final locality--aware mixture weights $\bar\alpha(z)\in\Delta^{K-1}$
(from the metric--window kernel and the lightweight router with top-$k$ gating;
Sec.~3.1--3.2), we dispatch each input to $K$ specialised density estimators
(``experts''). Expert $j$ is a small Mixture Density Network (MDN) that outputs,
for $c=1,\dots,C$, a component weight $\pi_{j,c}(x)$, a mean head, and a scale
head. We enforce $\sum_{c}\pi_{j,c}(x)=1$ via a softmax over logits, and predict
a log--scale $t_{j,c}(x)$ with
\[
  \sigma_{j,c}(x)\;=\;\bigl[\exp\{t_{j,c}(x)\}\bigr]_{\sigma_{\min}}^{\sigma_{\max}}
  \quad\text{(componentwise clamp)},\qquad
  0<\sigma_{\min}\le\sigma_{\max}<\infty .
\]

\paragraph{Anchor coupling}
Let $\mu_{\text{anc}}(x)$ denote the (z-scored) anchor mean produced by the
gradient-boosted baseline. The expert mean heads either predict an absolute
mean or a residual $\Delta_{j,c}(x)$ added to the anchor:
\[
  \mu^{\text{eff}}_{j,c}(x) \;=\;
  \begin{cases}
    \mu_{\text{anc}}(x) + \Delta_{j,c}(x), & \text{(anchor+delta, default)}\\[2pt]
    \mu_{\text{anc}}(x),                   & \text{(anchor only)}\\[2pt]
    \mu_{j,c}(x),                          & \text{(free, no anchor)}
  \end{cases}
\]
with a mild $\ell_2$ penalty $\lambda_\Delta\,\E[\Delta_{j,c}(X)^2]$ to discourage
unnecessary drift in the anchor+delta mode. When enabled, $\mu_{\text{anc}}(x)$ is
also concatenated to the expert/router inputs.

\paragraph{Predictive density}
For a univariate target (our default experiments), the conditional density is
\[
  p(y\mid x)
  \;=\;
  \sum_{j=1}^{K} \bar\alpha_j(z)\;
  \sum_{c=1}^{C} \pi_{j,c}(x)\;
  \mathcal{N}\!\bigl(y;\,\mu^{\text{eff}}_{j,c}(x),\,\sigma^{2}_{j,c}(x)\bigr).
\]
This design lets experts specialise on local regions (capturing heteroscedasticity
and multi-modality), while the gated mixture yields smooth probabilistic
interpolation across regions. The formulation extends to multivariate $y$ by
emitting vector means and a diagonal/low-rank (or full) covariance; we focus on
the univariate case here for clarity.
\subsection*{3.4 Calibration}

We use a small, disjoint \emph{calibration split} to correct the systematic bias of the predicted mean without leaking test information.
Let $\mathrm{Cal}$ index the calibration set (targets are z-scored as in training) and denote
$\hat\mu_i:=\hat\mu(x_i)$ the model's predicted mean in z-space.
We fit an affine map $(a,b)$ by least squares:
\[
  (a^\star,b^\star)
  \;=\;
  \arg\min_{a,b}\ \frac{1}{|\mathrm{Cal}|}\sum_{i\in\mathrm{Cal}}
  \bigl(y_i - (a\,\hat\mu_i + b)\bigr)^2,
\]
which has the closed form
\[
  a^\star \;=\; \frac{\mathrm{Cov}_{\mathrm{Cal}}(\hat\mu, y)}{\mathrm{Var}_{\mathrm{Cal}}(\hat\mu)},
  \qquad
  b^\star \;=\; \overline{y}_{\mathrm{Cal}} - a^\star\,\overline{\hat\mu}_{\mathrm{Cal}}.
\]
At test time we output the \emph{calibrated} mean
$\hat\mu_{\mathrm{cal}}(x)=a^\star\,\hat\mu(x)+b^\star$ in z-space and report RMSE after inverting the z-score to original units.
Unless otherwise stated, we keep the predictive variance unchanged and report NLL on the original (uncalibrated) z-space density to avoid optimistic bias.

\begin{algorithm}[ht]
\caption{Anchor--MoE training, calibration, and testing}
\label{alg:anchor-moe}
\begin{algorithmic}[1]

\State \textbf{Split:}
\Statex \quad $\mathcal D \to \mathcal D_{\text{train}} \,\dot\cup\, \mathcal D_{\text{test}}$;\quad
$\mathcal D_{\text{train}} \to \mathcal D_{\text{TV}} \,\dot\cup\, \mathcal D_{\text{cal}}$;\quad
$\mathcal D_{\text{TV}} \to \mathcal D_{\text{tr}} \,\dot\cup\, \mathcal D_{\text{va}}$.

\Statex
\State \textbf{GBDT selection (on TR/VA):}
\For{$t=1,\dots,T_g$}
  \State $e_t \gets \mathrm{RMSE}\!\big(y_{\text{va}},\, \mathrm{GBDT}_t(X_{\text{va}})\big)$
\EndFor
\State $t^\star \gets \arg\min_t e_t$
\State \textit{Train} a fresh $\mathrm{GBDT}_{t^\star}$ on $(X_{\text{tr}},y_{\text{tr}})$ to obtain $f_{\text{sub}}$
\State \textit{Refit} $\mathrm{GBDT}_{t^\star}$ on $(X_{\text{TV}},y_{\text{TV}})$ to obtain $\hat f$

\Statex
\State \textbf{Phase-1 (TR/VA): anchor z-score, feature standardization, MoE early selection}
\State $(\mu_{\text{tr}},\sigma_{\text{tr}}) \gets \mathrm{mean/std}(y_{\text{tr}})$
\State $z_{\text{tr}} \gets \mathrm{zsc}(y_{\text{tr}};\mu_{\text{tr}},\sigma_{\text{tr}})$;\quad
       $z_{\text{va}} \gets \mathrm{zsc}(y_{\text{va}};\mu_{\text{tr}},\sigma_{\text{tr}})$
\State $\alpha_{\text{tr}} \gets \mathrm{zsc}\!\big(f_{\text{sub}}(X_{\text{tr}});\mu_{\text{tr}},\sigma_{\text{tr}}\big)$;\quad
       $\alpha_{\text{va}} \gets \mathrm{zsc}\!\big(f_{\text{sub}}(X_{\text{va}});\mu_{\text{tr}},\sigma_{\text{tr}}\big)$
\State $\tilde X_{\text{tr}} \gets [X_{\text{tr}},\,\alpha_{\text{tr}}]$;\quad
       $\tilde X_{\text{va}} \gets [X_{\text{va}},\,\alpha_{\text{va}}]$
\State $(m_{\text{tr}},s_{\text{tr}}) \gets \mathrm{col\text{-}mean/std}(\tilde X_{\text{tr}})$
\State $\bar X_{\text{tr}} \gets \mathrm{std}(\tilde X_{\text{tr}};m_{\text{tr}},s_{\text{tr}})$;\quad
       $\bar X_{\text{va}} \gets \mathrm{std}(\tilde X_{\text{va}};m_{\text{tr}},s_{\text{tr}})$
\State initialize $\Theta_1$
\For{$t=1,\dots,T_{\max}$}
  \State $\Theta_{t+1} \gets \Theta_t - \eta\,\nabla_{\Theta}\,\mathrm{NLL}\!\big(\bar X_{\text{tr}},\, z_{\text{tr}};\,\Theta_t\big)$
\EndFor
\State $t^\star_{\text{MoE}} \gets \arg\min_t \mathrm{NLL}\!\big(\bar X_{\text{va}},\, z_{\text{va}};\,\Theta_t\big)$
\State $\Theta^\dagger \gets \Theta_{t^\star_{\text{MoE}}}$

\Statex
\State \textbf{Phase-2 (TV/CAL/TEST): freeze early epoch, refit on TV, prep CAL/TEST}
\State $(\mu_{\text{tv}},\sigma_{\text{tv}}) \gets \mathrm{mean/std}(y_{\text{TV}})$
\State $z_{\text{tv}} \gets \mathrm{zsc}(y_{\text{TV}};\mu_{\text{tv}},\sigma_{\text{tv}})$
\For{$S \in \{\text{TV},\,\text{cal},\,\text{test}\}$}
  \State $\alpha_S \gets \mathrm{zsc}\!\big(\hat f(X_S);\mu_{\text{tv}},\sigma_{\text{tv}}\big)$;\quad
         $\tilde X_S \gets [X_S,\,\alpha_S]$
\EndFor
\State $(m_{\text{tv}},s_{\text{tv}}) \gets \mathrm{col\text{-}mean/std}(\tilde X_{\text{TV}})$
\State $\bar X_S \gets \mathrm{std}(\tilde X_S;m_{\text{tv}},s_{\text{tv}})$ for $S\in\{\text{TV},\text{cal},\text{test}\}$
\State reload $\Theta^\dagger$
\For{$t=1,\dots, t^\star_{\text{MoE}}$}
  \State $\Theta \gets \Theta - \eta\,\nabla_{\Theta}\,\mathrm{NLL}\!\big(\bar X_{\text{TV}},\, z_{\text{tv}};\,\Theta\big)$
\EndFor

\Statex
\State \textbf{Calibration (on CAL): linear post-hoc map for mean)}
\State $\hat\mu_{\text{cal}}^{\text{orig}} \gets \sigma_{\text{tv}}\cdot \hat\mu_z(\bar X_{\text{cal}};\Theta) + \mu_{\text{tv}}$
\State $(a,b) \gets \arg\min_{a,b}\,\big\|\,a\,\hat\mu_{\text{cal}}^{\text{orig}} + b - y_{\text{cal}}\,\big\|_2^2$

\Statex
\State \textbf{Test: report calibrated RMSE (orig) and NLL (z-space)}
\State $\hat\mu_{\text{test}}^{\text{orig}} \gets \sigma_{\text{tv}}\cdot \hat\mu_z(\bar X_{\text{test}};\Theta) + \mu_{\text{tv}}$
\State $\hat\mu_{\text{test}}^{\text{cal}} \gets a\,\hat\mu_{\text{test}}^{\text{orig}} + b$
\State $\mathrm{RMSE} \gets \mathrm{RMSE}\!\big(y_{\text{test}},\,\hat\mu_{\text{test}}^{\text{cal}}\big)$
\State $\mathrm{NLL}_z \gets \mathrm{NLL}\!\big(\bar X_{\text{test}},\,\mathrm{zsc}(y_{\text{test}};\mu_{\text{tv}},\sigma_{\text{tv}});\Theta\big)$
\State \Return $\Theta^\ast\!=\!\Theta,\ (a,b),\ \mathrm{RMSE},\ \mathrm{NLL}_z$

\end{algorithmic}
\end{algorithm}

\section*{4. Theoretical Analysis}\label{sec:theory}

\noindent\textbf{Why theory and how to use it.}
The theory serves two purposes. First, it certifies that Anchor--MoE is statistically competitive: under mild smoothness and bounded–overlap assumptions it achieves minimax–optimal $L^2$ rates, adapts to intrinsic dimension (manifolds or sparsity), and enjoys $\widetilde{\mathcal O}(N^{-1/2})$ generalization gaps for proper scoring rules. Second, it provides \emph{practical guidance} for model sizing: the bias--variance decomposition yields an approximation term $C_1K^{-2\alpha/d}$ and an estimation term $C_2(k\,\mathrm{comp})K/N$, suggesting
\[
K \asymp N^{d/(2\alpha+d)}\quad
\]
Because bounded-overlap/top-$k$ routing makes the estimation term depend on the constant $k$ rather than $K$ per input, increasing $K$ beyond this balance rarely helps unless $N$ grows. The CRPS/NLL generalization bound scales like $\widetilde{\mathcal O}\!\big(\sqrt{(\log(Mh)+P+K)/N}\big)$, so keep the mixture size $M$ small (e.g., $2$--$5$), hidden width $h$ moderate, and router size $P$ controlled. Finally, the NLL–$L^2$ link justifies training with Gaussian NLL while reporting mean-squared risk for the predictive mean.

To be more specific, we analyze Anchor--MoE as a probabilistic regressor but evaluate risk on the
\emph{predictive mean}
\[
  \widehat f(x)\;:=\;\mathbb{E}_{\widehat p(\cdot\mid x)}[\,Y\,] .
\]
Unless stated otherwise, $L^2$ norms are taken with respect to the Lebesgue measure on $[0,1]^d$;
on a $d_0$-dimensional manifold $\mathcal{M}$ we use the normalized volume measure $\mu_{\mathcal M}$
(Hausdorff) for $L^2(\mathcal M)$.
\par
Gating is implemented by a \emph{fixed (non-learned)} compactly supported partition of unity (PoU)
with bounded overlap $k$, so at each $x$ at most $k$ experts have nonzero weight.
Each expert mean belongs to a class of \emph{bounded capacity} whose width/depth/covering numbers
do not scale with $K$. These are the standing assumptions used in the appendix
(Assumptions~A1--A3 and, for generalization under CRPS/NLL, G1--G2).
We write $\Rad_N(\cdot)$ for empirical Rademacher complexity. The pseudocode is demonstrated in Algorithm~\ref{alg:anchor-moe}.

\subsection*{4.1 Approximation and minimax-optimal rates}\label{subsec:approx}

Let $f^\star\!\in\!\cF_\alpha(L)$ be $\alpha$-H\"older on $[0,1]^d$.
With $K$ PoU windows of mesh $h\asymp K^{-1/d}$ (fixed, non-learned, bounded overlap $k$), the local interpolant has squared error $\mathcal{O}(K^{-2\alpha/d})$.
Estimating $K$ expert means with overlap $k$ and per–expert capacity $\mathrm{comp}$ contributes $\mathcal{O}(k\,\mathrm{comp}\,K/N)$.
Balancing the two terms yields the classical minimax rate:

\begin{quote}
\textbf{Main bound (see Appendix~A1.5, Thm.~\ref{thm:main_safe}).}
Under the assumptions above,
\[
  \mathbb{E}\!\bigl[\|\widehat f-f^\star\|_{L^2}^2\bigr]
  \;\le\;
  C_1\,K^{-2\alpha/d}
  \;+\;
  C_2\,\frac{k\,\mathrm{comp}\,K}{N}.
\]
Choosing $K^\star\asymp N^{d/(2\alpha+d)}$ gives
\[
  \sup_{f^\star\in\cF_\alpha(L)}
  \mathbb{E}\!\left[\|\widehat f-f^\star\|_{L^2}^2\right]
  \;\lesssim\;
  N^{-2\alpha/(2\alpha+d)},
\]
matching the information-theoretic lower bound.
\end{quote}

We train with Gaussian NLL while evaluating $L^2$ risk on the predictive mean.
This is justified by an \emph{NLL--$L^2$ link}: when variances are bounded away from $0$ and $\infty$, the excess NLL controls the mean-squared error up to constants, plus a variance-mismatch term.

\subsection*{4.2 Generalisation under CRPS}\label{subsec:gen}

For probabilistic calibration we assess CRPS.
Using that CRPS is $2$-Lipschitz in the predictive CDF under the $L^1$ metric and that the loss is bounded once expert means and variances are bounded (e.g., $|e_j(x)|\!\le\!R_f$, $\sigma(x)\!\in[\underline\sigma,\overline\sigma]$, and $y\!\in[-R_y,R_y]$, giving $B\!\le\!R_f+R_y+\sqrt{2/\pi}\,\overline\sigma$), the population–empirical gap admits a standard Rademacher bound (see Appendix~A2, Thm.~\ref{thm:gen}):
\begin{quote}
\textbf{Generalisation bound.}
For any $\delta\in(0,1)$, with probability at least $1-\delta$,
\[
\mathcal R(\hat\theta,\hat\phi)-\widehat{\mathcal R}_N(\hat\theta,\hat\phi)
\;\le\;
4\,\Rad_N(\Fclass)
\;+\;
3B\,\sqrt{\frac{\log(2/\delta)}{2N}}.
\]
Under the capacity assumptions (Appendix~A2) we have
$\Rad_N(\Fclass)\ \le\ C_{\!*}\sqrt{(\log(Mh)+P+K)/N}$, hence the gap scales as
$\widetilde{\mathcal O}\!\big(\sqrt{(\log(Mh)+P+K)/N}\big)$.
Here $M$ is the number of mixture components per expert, $h$ the expert width proxy, $P$ the router size, and $K$ the number of experts.
\end{quote}

With top-$k$ bounded-overlap gating, the dependence on $K$ inside $\Rad_N(\Fclass)$ can be replaced by the constant $k$.

\subsection*{4.3 High-dimensional scaling}\label{subsec:high-d}

Anchor--MoE adapts to intrinsic dimension in two ubiquitous regimes.

\paragraph{(a) Data on a manifold.}
Let \(X\) lie on a compact \(C^1\) submanifold \(\mathcal M\subset[0,1]^d\) of intrinsic dimension \(d_0\) (positive reach),
and measure risk in \(L^2(\mathcal M)\) w.r.t.\ the normalised Hausdorff measure.
Using a fixed geodesic PoU with bounded overlap, the approximation term becomes \(K^{-2\alpha/d_0}\)
while the estimation term is unchanged:

\begin{quote}
\textbf{Manifold rate (see Appendix Theorem~\ref{thm:manifold-rate}).}
\(
\displaystyle
\mathbb{E}\bigl\|\widehat f-f^\star\bigr\|_{L^2(\mathcal M)}^2
\;\le\;
C_1 K^{-2\alpha/d_0}
\;+\;
C_2\,\frac{k\,\mathrm{comp}\,K}{N},
\)
so \(K^\star\asymp N^{d_0/(2\alpha+d_0)}\) yields the rate
\(N^{-2\alpha/(2\alpha+d_0)}\).
\end{quote}
\paragraph{(b) Sparse coordinate dependence.}
If \(f^\star(x)=g^\star(x_S)\) depends on only \(s\ll d\) coordinates \(S\) (oracle known),
a PoU on the \(s\)-dimensional subspace gives:
\begin{quote}
\textbf{Sparse rate (see Appendix Theorem~\ref{thm:sparse-rate}).}
\(
\displaystyle
\mathbb{E}\bigl\|\widehat f-f^\star\bigr\|_{L^2}^2
\;\le\;
C_1 K^{-2\alpha/s}
\;+\;
C_2\,\frac{k\,\mathrm{comp}\,K}{N},
\)
so \(K^\star\asymp N^{s/(2\alpha+s)}\) yields
\(N^{-2\alpha/(2\alpha+s)}\).
If \(S\) must be learned, a model-selection penalty
\(\,\widetilde{\mathcal O}\!\bigl((s\log d)/N\bigr)\) typically augments the estimation term.
\end{quote}

Here \(L^2\) is w.r.t.\ the marginal law of \(X\) (on \(\mathcal M\) in (a), on \([0,1]^s\) in (b)); if the marginal density of \(X_S\) is bounded above/below, constants depend only on these bounds.
\par
Choose \(K\) by balancing
\(K^{-2\alpha/\text{(intrinsic dim)}} \approx (k\,\mathrm{comp}\,K)/N\):
\(K\propto N^{d/(2\alpha+d)}\) in full dimension,
\(K\propto N^{d_0/(2\alpha+d_0)}\) on manifolds,
and \(K\propto N^{s/(2\alpha+s)}\) under sparsity.

\section*{5. Experiments}
\paragraph{Theory-guided setup.}
We choose hyperparameters to match the assumptions used in our analysis: a fixed metric–window with top-$k$ routing (bounded overlap $k$), experts of fixed capacity (width/depth not growing with $K$), and variance clamping $\sigma(x)\!\in\![\sigma_{\min},\sigma_{\max}]$ (Assump.~G1).
We keep $K$ small and constant across datasets (here $K{=}8$), consistent with balancing the $\mathcal{O}(K^{-2\alpha/d})$ approximation term and the $\mathcal{O}(k\,\mathrm{comp}\,K/N)$ estimation term. 
We report NLL in $z$-space and RMSE on post-hoc calibrated means as in Sec.~3.4; the CRPS-based generalization bound in Appendix~A2 yields the same $\widetilde{\mathcal O}(N^{-1/2})$ rate for bounded-NLL, up to constants.
\par
Each experiment follows the protocol of \cite{hernandez2015probabilistic} on nine UCI benchmark datasets.
For each dataset we perform 20 random outer splits with 90\% for training and 10\% for testing.
Within each outer training fold, we hold out 20\% as a validation set to select the number of GBDT boosting stages
that \emph{maximizes} validation log-likelihood (equivalently, minimizes NLL).
The chosen stage is then refit on the full 90\% training fold before training the MoE.
For the \textsc{Protein} dataset, to speed up training we randomly subsample 10{,}000 examples \emph{per run} prior to the 90/10 split.
We repeat the entire process 20 times and report the mean \(\pm\) standard error across runs.
Unless otherwise stated, RMSE is computed on post-hoc calibrated means on the original scale, while NLL is computed on the
un-calibrated predictive density in the \(z\)-scored space.
\par
For all experiments we use a common configuration. The latent projection dimension is \(D=2\). We set the number of experts to \(K=8\) and use top-\(k\) gating with \(k=2\) (with light smoothing). Each expert is a small MLP with hidden width \(128\) and a 3-component MDN head (\(C=3\)). We train with a learning rate of \(10^{-3}\) for \(400\) epochs, and clamp predicted standard deviations to the interval \([5\times 10^{-2},\,1]\). The simulation with toy daset is illustrated in Figure~\ref{fig:1dToy} and Figure~\ref{fig:dynamics}. Figure~\ref{fig:gradientField} shows how anchor part will improve the parameter updating.

For \textsc{Protein}, which is subsampled to 10{,}000 examples per run to speed up training, we re-train NGBoost on the same subsamples for a fair comparison. For the other datasets we use 100\% of the data and report the NGBoost results from \cite{huan2020}. Results are shown in Table~\ref{tab:uci_nll} and Table~\ref{tab:uci_rmse} for uncertainty and point estimation respectively.
\par
We evaluate uncertainty quality using the average test negative log-likelihood (NLL):
\[
\mathrm{NLL}\;=\;-\frac{1}{|\mathcal{D}_{\mathrm{test}}|}\sum_{(x,y)\in\mathcal{D}_{\mathrm{test}}}
\log p_{\theta}(y\mid x),
\]
where lower is better. Our primary comparison is against NGBoost; results for other baselines are reported in the Appendix.

Although Anchor--MoE is designed for uncertainty estimation, a point prediction is readily obtained as the predictive mean \(\hat\mu(x)=\mathbb{E}_{p_{\theta}(\cdot\mid x)}[Y]\). We assess point prediction quality with test RMSE:
\[
\mathrm{RMSE}\;=\;\sqrt{\frac{1}{|\mathcal{D}_{\mathrm{test}}|}\sum_{(x,y)\in\mathcal{D}_{\mathrm{test}}}\bigl(\hat\mu(x)-y\bigr)^2}.
\]
Unless otherwise noted, we use the same configuration as in the uncertainty experiments (and apply the post-hoc mean calibration for RMSE as described in Section~3.4). Ablation results are shown in Table~\ref{tab:uci_ablation_nll} and in Table~\ref{tab:uci_rmse} for uncertainty and point estimation respectively.We use GBDT as the default anchor for reproducibility; other anchors can be dropped in without changing the training pipeline.
\par
\paragraph{Ablation protocol.}
To quantify the contribution of each key component, we conduct ablation studies under the same configuration as in the uncertainty experiments. We report both test NLL (on the uncalibrated $z$-space density) and test RMSE (on the predictive mean).

\begin{itemize}[leftmargin=1.5em]
\item \textbf{Anchor.} The default \texttt{anchor+delta} mode uses a small GBDT to produce an anchor mean $\hat\mu_{\text{anc}}(x)$; expert heads predict residuals that correct this anchor and also output variances. In \textbf{No-Anchor}, we remove the anchor feature and residual coupling so the experts predict free means. This isolates the respective contributions of the anchor and the MoE.
\item \textbf{Router.} The router is a lightweight dot-product router $g(z)$ combined multiplicatively with the metric–window weights $w(z)$ and top-$k$ gating. In \textbf{No-Router}, we set $g(z)\!\equiv\!1$ and rely solely on the metric–window (with the same top-$k$ mask and smoothing), then renormalize. This tests the benefit of content-dependent routing.
\item \textbf{Calibration.} A 1D linear calibrator $\mu' = a\,\mu + b$ is fit on a held-out split to improve point accuracy. In \textbf{No-Cal}, we report RMSE using the uncalibrated means.
\end{itemize}

\begin{table}[H]
  \centering
  \caption{Comparison on UCI Benchmark dataset as measured by NLL. Results for NGBoost are reported from\cite{huan2020}, for other methods results are also reported from thier previous work, see Table~\ref{tab:moe_others} in appendix. Anchor-MoE provide the competitive performance in terms of NLL on complex dataset. Best results are bolded with standard error.}
  \label{tab:uci_nll}
  \begin{tabular}{l c c c}
    \toprule
    Dataset & \textit{N} & Anchor-MoE & NGBoost \\
    \midrule
    Boston  & 506  & \textbf{0.60 $\pm$ 0.11} & 2.43 $\pm$ 0.15\\
    
    Concrete  & 1030 & \textbf{0.25 $\pm$ 0.06} & 3.04 $\pm$ 0.17\\
    
    Energy & 768 & \textbf{-1.68 $\pm$ 0.2} & 0.46 $\pm$ 0.06\\
    
    Kin8nm & 8192&0.12 $\pm$ 0.01  & \textbf{-0.49 $\pm$ 0.02}\\
    
    Naval & 11934& -1.26 $\pm$ 0.02&  \textbf{-5.34 $\pm$ 0.04}  \\
    
    Power & 9568 & \textbf{-0.15 $\pm$ 0.02} & 2.79 $\pm$ 0.11 \\
    
    Protein & 10000 & \textbf{1.06$\pm$ 0.04} & 1.24 $\pm$ 0.04 \\
    
    Wine & 1599 & \textbf{1.20$\pm$ 0.02}  & 4.96 $\pm$ 0.60 \\
    
    Yacht & 308 &  \textbf{-1.80$\pm$ 0.04}  &  0.2 $\pm$ 0.26  \\
    \bottomrule
  \end{tabular}
\end{table}

\begin{table}[H]
  \centering
  \caption{Comparison on UCI Benchmark dataset as measured by RMSE. Anchor-MoE offers similar results as NGBoost.Bolding is as in Table 1.}
  \label{tab:uci_rmse}
  \begin{tabular}{l c c c}
    \toprule
    Dataset & \textit{N} & Anchor-MoE & NGBoost\\
    \midrule
    Boston  & 506  & 3.01 $\pm$ 0.14 & \textbf{2.94 $\pm$ 0.53} \\
    
    Concrete  & 1030 & \textbf{4.45 $\pm$ 0.16} & 5.06 $\pm$ 0.61  \\
    
    Energy & 768 & 0.47 $\pm$ 0.02 & \textbf{0.46 $\pm$ 0.06}\\
    
    Kin8nm & 8192& \textbf{0.07 $\pm$ 0.00} & 0.16 $\pm$ 0.00\\
    
    Naval & 11934& 0.00 $\pm$ 0.00 &  0.00 $\pm$ 0.00  \\
    
    Power & 9568 & \textbf{3.21 $\pm$ 0.05} & 3.79 $\pm$ 0.18 \\
    
    Protein & 10000 & \textbf{4.41 $\pm$ 0.02} & \textbf{4.44 $\pm$ 0.02} \\
    
    Wine & 1599 & 0.62 $\pm$ 0.01 & \textbf{0.60 $\pm$ 0.01} \\
    
    Yacht & 308 & 0.62 $\pm$ 0.06& \textbf{0.50 $\pm$ 0.20} \\
    \bottomrule
  \end{tabular}
\end{table}

\begin{figure}[h]
\centering\includegraphics[width=0.9\columnwidth]{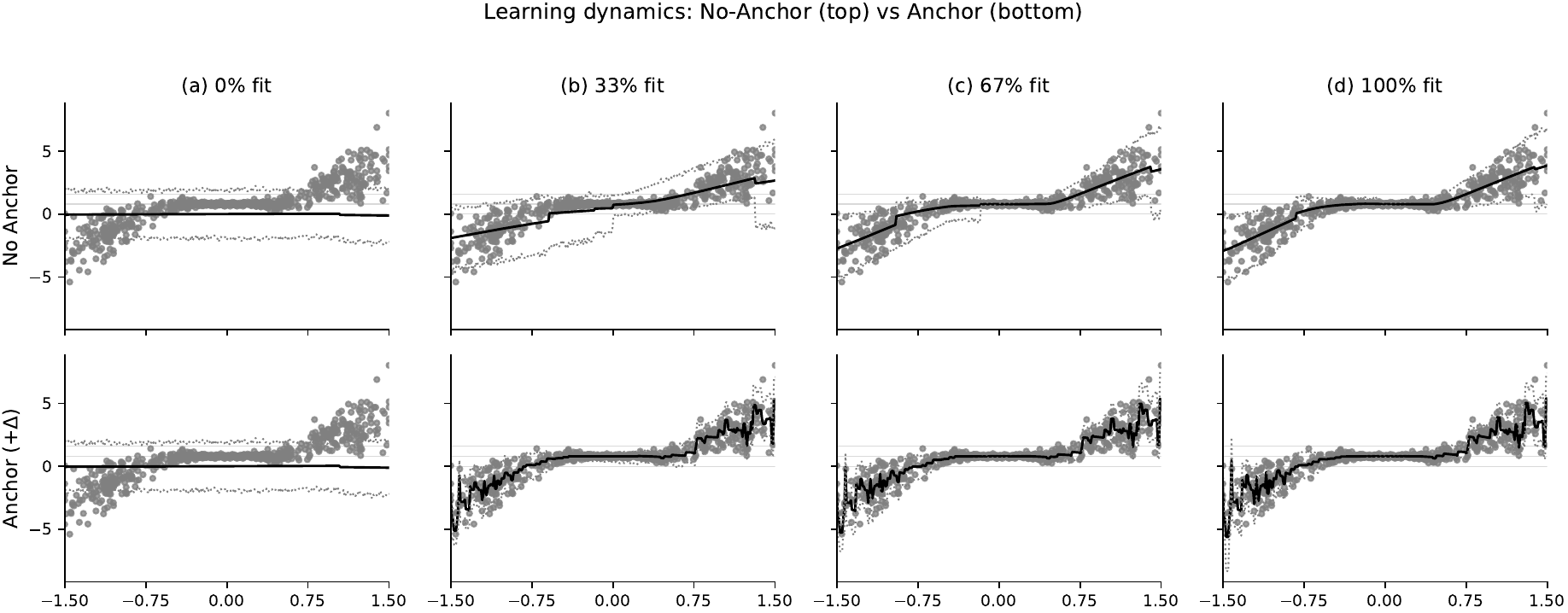}
\caption{Learning dynamics on a toy 1D dataset: No-Anchor (top) vs Anchor (+$\Delta$, bottom) at 0\%, 33\%, 67\%, and 100\% fit. Line as in Figure~1. Without anchor, updates emphasize global trend and show larger oscillations with tail variance inflation; with anchor, updates are balanced, the central plateau is preserved earlier, and predictive intervals are better calibrated.}
\label{fig:dynamics}
\end{figure}

\begin{table}[H]
  \centering
  \caption{Comparison on UCI Benchmark dataset as measured by NLL while ablating key components of Anchor-MoE. Bolding is as in Table 1.}
  \label{tab:uci_ablation_nll}
  \begin{tabular}{l c c c c c}
    \toprule
    Dataset & \textit{N} & Anchor-MoE & Anchor & Router & Calibration \\
    \midrule
    Boston  & 506  & 0.60 $\pm$ 0.11 & 0.83 $\pm$ 0.24 & \textbf{0.51 $\pm$ 0.05} & 0.52 $\pm$ 0.05\\
    
    Concrete  & 1030 & 0.25 $\pm$ 0.06 & 0.73 $\pm$ 0.04 & \textbf{0.20 $\pm$ 0.05} & \textbf{0.20 $\pm$ 0.06}\\

    Energy & 768 & \textbf{-1.68 $\pm$ 0.2} & -1.30 $\pm$ 0.05 & -0.76 $\pm$ 0.05 & -0.96 $\pm$ 0.05\\
    
    Kin8nm & 8192& \textbf{0.12 $\pm$ 0.01}  & 0.68 $\pm$ 0.02 & 1.00 $\pm$ 0.01&0.97$\pm$ 0.01\\
    
    Naval & 11934& \textbf{-1.26 $\pm$ 0.02} &  -1.09 $\pm$ 0.02  & -1.10 $\pm$ 0.02 & -1.12$\pm$ 0.02\\
    
    Power & 9568 & -0.15 $\pm$ 0.02 & -0.05 $\pm$ 0.03 & -0.15 $\pm$ 0.02& \textbf{-0.18 $\pm$ 0.02}\\
    
    Protein & 10000 & 1.06$\pm$ 0.04 & \textbf{0.63 $\pm$ 0.01}&1.05 $\pm$ 0.02 & 0.90 $\pm$ 0.03 \\
    
    Wine & 1599 & \textbf{1.20$\pm$ 0.02}  & 1.52 $\pm$ 0.43 &1.16 $\pm$ 0.02 & 1.21$\pm$0.03 \\
    
    Yacht & 308 &  -1.80$\pm$ 0.04  &  0.24 $\pm$ 0.42 & -1.76$\pm$ 0.03 & \textbf{-1.83 $\pm$ 0.03}\\
    \bottomrule
  \end{tabular}
\end{table}

\begin{table}[H]
  \centering
  \caption{Comparison on UCI Benchmark dataset as measured by RMSE while ablating key components of Anchor-MoE. Bolding is as in Table 1. Calibration can reduces RMSE significantly on Energy dataset, although it slightly increase RMSE on others.}
  \label{tab:uci_ablation_rmse}
  \begin{tabular}{l c c c c c}
    \toprule
    Dataset & \textit{N} & Anchor-MoE & Anchor & Router & Calibration \\
    \midrule
    Boston  & 506  & 3.01 $\pm$ 0.14 & 4.14 $\pm$ 0.28 & 2.88 $\pm$ 0.12 & 2.75 $\pm$ 0.10\\
    
    Concrete  & 1030 & 4.45 $\pm$ 0.16 & 7.75 $\pm$ 0.15 & 4.44 $\pm$ 0.14 & 4.18 $\pm$ 0.12\\

    Energy & 768 & 0.47 $\pm$ 0.02 & 1.48 $\pm$ 0.13 & 1.23 $\pm$ 0.04 & 1.01 $\pm$ 0.03\\
    
    Kin8nm & 8192& 0.07 $\pm$ 0.00  & 0.11 $\pm$ 0.00 & 0.15 $\pm$ 0.00& 0.15$\pm$ 0.00\\
    
    Naval & 11934& 0.00 $\pm$ 0.00 &  0.00 $\pm$ 0.00  & 0.00 $\pm$ 0.00 & 0.00$\pm$ 0.00\\
    
    Power & 9568 & 3.21 $\pm$ 0.05 & 4.01 $\pm$ 0.04 & 3.22 $\pm$ 0.05& 3.16 $\pm$ 0.05\\
    
    Protein & 10000 & 4.41$\pm$ 0.02 & 4.71 $\pm$ 0.03 &4.42 $\pm$ 0.03 &4.37 $\pm$ 0.02 \\
    
    Wine & 1599 & 0.62$\pm$ 0.01  & 0.65 $\pm$ 0.01 &0.62 $\pm$ 0.00 & 0.61$\pm$ 0.00\\
    
    Yacht & 308 & 0.62$\pm$ 0.06 &  4.19$\pm$ 0.33 & 0.62 $\pm$ 0.04 & 0.52 $\pm$ 0.04\\
    \bottomrule
  \end{tabular}
\end{table}

\section*{6. Conclusion}

We introduced \emph{Anchor--MoE}, a simple and modular architecture for point and probabilistic prediction. A small tree-based model supplies an \emph{anchor mean}, a lightweight metric–window with a soft top-$k$ router sparsely dispatches inputs to mixture-density experts, and a one–dimensional post-hoc calibrator adjusts the mean for improved point accuracy. The components are loosely coupled, easy to ablate, and the design extends naturally to classification or survival settings by swapping the likelihood.

Empirically, Anchor--MoE delivers competitive performance across standard UCI benchmarks. For probabilistic regression (test NLL), it matches or surpasses strong baselines on most datasets, with two notable exceptions (Kin8nm and Naval). For point prediction (test RMSE), Anchor--MoE and NGBoost excel on different data regimes; overall their RMSEs are closely matched and often differ only marginally.

On the theory side, under isotropic H\"older smoothness of order~$\alpha$ and fixed Lipschitz partition-of-unity weights with bounded overlap, Anchor--MoE attains the minimax-optimal $L^2$ approximation rate $\mathcal{O}\!\big(N^{-2\alpha/(2\alpha+d)}\big)$, with intrinsic-dimension refinements on manifolds of dimension $d_0$ ($\mathcal{O}\!\big(N^{-2\alpha/(2\alpha+d_0)}\big)$) and under coordinate sparsity $s$ ($\mathcal{O}\!\big(N^{-2\alpha/(2\alpha+s)}\big)$). In addition, the CRPS generalization gap scales as $\widetilde{\mathcal{O}}\!\big(\sqrt{(\log(Mh)+P+K)/N}\big)$—logarithmic in the expert width $h$ and mixture size $M$, and $\sqrt{\cdot/N}$ in the router size $P$ and number of experts $K$ (under bounded-overlap routing, $K$ can be replaced by the constant $k$). Under uniformly bounded means and variances, the same $\widetilde{\mathcal{O}}\!\big(\sqrt{(\log(Mh)+P+K)/N}\big)$ scaling holds for test NLL up to constants.

Ablations clarify the role of each component. The anchor and MoE complement each other: without an anchor, the MoE can reduce NLL by inflating variance; the anchor stabilizes variance and lets experts focus on heteroscedastic residuals. The router’s soft gating encourages specialization and balanced load; removing it leaves the window kernel to gate alone, which consistently worsens NLL and tail coverage and can harm RMSE on complex datasets. The linear mean calibrator reduces bias and improves RMSE with negligible effect on NLL in the $z$-scored space (occasionally a slight NLL increase on some datasets).

Avoiding large-variance hedging when training MoE without an anchor is a key open issue and would further decouple the two stages. Replacing the held-out mean calibration with “calibration-by-design” could simplify the pipeline and reduce data fragmentation. Exploring capacity-controlled learned routers with explicit sparsity/entropy regularization, adaptive $k$ for top-$k$ gating, and data-dependent temperature schedules may improve specialization without hurting generalization. Under covariate shift, combining Anchor--MoE with conformal calibration, latent-space OOD detection, or test time variance adaptation is a promising direction.

\section*{Acknowledgment}
This work is partially done at Linkconn Electronics Co., Ltd, we thank them for supporting this research.

\section*{Appendix}

\begin{table*}[ht]
\centering
\small
\caption{Test NLL on UCI datasets. Anchor--MoE numbers are from our runs; the other baselines are taken from prior reports of~\cite{gal2016dropout, lakshminarayanan2017simple, gal2017concrete},    . Best per row in \textbf{bold}. Protein dataset is removed as it is resampled in this study.}
\scalebox{0.8}{
\begin{tabular}{l r rrrrrrr}
\toprule
Dataset & $N$ &
\multicolumn{1}{c}{Anchor--MoE} &
\multicolumn{1}{c}{MC dropout} &
\multicolumn{1}{c}{Deep Ensembles} &
\multicolumn{1}{c}{Concrete Dropout} &
\multicolumn{1}{c}{Gaussian Process} &
\multicolumn{1}{c}{GAMLSS} &
\multicolumn{1}{c}{DistForest} \\
\midrule
Boston   &  506  & \textbf{0.60 $\pm$ 0.11} & 2.46 $\pm$ 0.25 & 2.41 $\pm$ 0.25 & 2.72 $\pm$ 0.01 & 2.37 $\pm$ 0.24 & 2.73 $\pm$ 0.56 & 2.67 $\pm$ 0.08 \\
Concrete & 1030  & \textbf{0.25 $\pm$ 0.06} & 3.04 $\pm$ 0.09 & 3.06 $\pm$ 0.18 & 3.51 $\pm$ 0.00 & 3.03 $\pm$ 0.11 & 3.24 $\pm$ 0.08 & 3.38 $\pm$ 0.05 \\
Energy   &  768  & \textbf{-1.68 $\pm$ 0.20} & 1.99 $\pm$ 0.09 & 1.38 $\pm$ 0.22 & 2.30 $\pm$ 0.00 & 0.66 $\pm$ 0.17 & 1.24 $\pm$ 0.86 & 1.53 $\pm$ 0.14 \\
Kin8nm   & 8192  & 0.12 $\pm$ 0.01 & -0.95 $\pm$ 0.03 & \textbf{-1.20 $\pm$ 0.02} & -0.65 $\pm$ 0.00 & -0.11 $\pm$ 0.03 & -0.26 $\pm$ 0.02 & -0.40 $\pm$ 0.01 \\
Naval    & 11934 & -1.26 $\pm$ 0.02 & -3.80 $\pm$ 0.05 & -5.63 $\pm$ 0.05 & \textbf{-5.87 $\pm$ 0.05} & -0.98 $\pm$ 0.02 & -5.56 $\pm$ 0.07 & -4.84 $\pm$ 0.01 \\
Power    &  9568 & \textbf{-0.15 $\pm$ 0.02} & 2.80 $\pm$ 0.05 & 2.79 $\pm$ 0.04 & 2.75 $\pm$ 0.01 & 3.81 $\pm$ 0.05 & 2.86 $\pm$ 0.04 & 2.68 $\pm$ 0.05 \\
Wine     &  1599 & 1.20 $\pm$ 0.02 & \textbf{0.93 $\pm$ 0.06} & 0.94 $\pm$ 0.12 & 1.70 $\pm$ 0.00 & 0.95 $\pm$ 0.06 & 0.97 $\pm$ 0.09 & 1.05 $\pm$ 0.15 \\
Yacht    &   308 & \textbf{-1.80 $\pm$ 0.04} & 1.55 $\pm$ 0.12 & 1.18 $\pm$ 0.21 & 1.75 $\pm$ 0.00 & 0.10 $\pm$ 0.26 & 0.80 $\pm$ 0.56 & 2.94 $\pm$ 0.09 \\
\bottomrule
\end{tabular}
}
\label{tab:moe_others}
\end{table*}


\addcontentsline{toc}{section}{Appendix}

\subsection*{A1. Minimax–optimal rate of Anchor–MoE (no dimension reduction)}
\label{app:minimax_main}

\paragraph{Notation.}
For \(d\in\N\) let \(\cF_\alpha(L)\) be the isotropic H\"older ball
of order \(\alpha>0\) and radius \(L>0\) on \([0,1]^d\) \citep[Def.~24.1]{vanderVaart1998}.
We write \(\|\cdot\|_2\) for the \(L^2([0,1]^d)\) norm and
\(\Rad_N(\cH)\) for the empirical Rademacher complexity \citep[Ch.~11]{AnthonyBartlett1999}.
Let the lattice mesh be \(h:=K^{-1/d}\).

\paragraph{Predictor and risk.}
The model is probabilistic (MDN). We evaluate the risk of the
\emph{predictive mean}. Let
\[
\widehat f_{K,N}(x)\;:=\;\E_{\widehat p(y\mid x)}[\,Y\,]
\]
be the mean of the learned predictive density \(\widehat p(y\mid x)\).
All bounds below concern \(\widehat f_{K,N}\).

\paragraph{Problem setup.}
Observe i.i.d.\ \((X_i,Y_i)\) with \(X_i\sim{\rm Unif}[0,1]^d\) and
\(Y_i=f^\star(X_i)+\varepsilon_i\) where \(\varepsilon_i\sim\cN(0,\sigma^2)\) and \(f^\star\in\cF_\alpha(L)\).
We analyse the integrated squared risk
\(\cR_N=\E\bigl[\|\,\widehat f_{K,N}-f^\star\,\|_2^2\bigr]\).

\paragraph{Model class (theoretical abstraction).}
The practical anchor mean can be absorbed into experts’ mean functions without changing rates.
We consider
\[
\cH_K
=\Bigl\{\,x\mapsto \sum_{j=1}^K w_j(x)\,e_j(x)\;:\;
\{w_j\}\ \text{is a PoU on }[0,1]^d,\ \ e_j\in\cE\Bigr\},
\]
where \(e_j(\cdot)\) denotes the \emph{expert mean function} and \(\cE\) is a bounded–capacity MDN mean class (fixed across \(K\)).

\paragraph{Assumptions.}
\begin{itemize}[leftmargin=2em]
\item[(A1)] \textbf{No dimension reduction.}
\(f_\phi=\mathrm{Id}\) on \([0,1]^d\); equivalently one may allow an invertible affine map \(f_\phi(x)=Ax+b\) with bounded condition
number, which only rescales constants.

\item[(A2)] \textbf{Partition of unity (PoU) with bounded overlap.}
Let \(\{x_j\}_{j=1}^K\) be a regular lattice with mesh \(h=K^{-1/d}\).
There exists a compactly supported PoU \(\{w_j\}_{j=1}^K\) (e.g., tensor-product B-splines)
such that \(w_j\ge0\), \(\sum_j w_j(x)=1\) for all \(x\),
\(\operatorname{diam}(\operatorname{supp} w_j)\lesssim h\), and at most \(k\) of the \(w_j(x)\) are nonzero for any \(x\)
(bounded overlap). At the boundary, cells are truncated and weights renormalized.

\item[(A3)] \textbf{Experts of bounded capacity.}
Each expert mean \(e_j\in\cE\) has fixed complexity \(\mathrm{comp}\) independent of \(K\)
(e.g., uniform Lipschitz/covering numbers or pseudo-dimension bounds; MDN variances are bounded
away from \(0\) and \(\infty\) so training is well-conditioned).
\end{itemize}

\subsubsection*{A1.1\quad Information–theoretic lower bound}

\begin{lemma}[Minimax lower bound]\label{lem:lower}
For any estimator \(\widehat f_N\) based on \(N\) samples,
\[
  \sup_{f^\star\in\cF_\alpha(L)}
  \E\bigl[\|\,\widehat f_N-f^\star\,\|_2^2\bigr]
  \;\ge\;
  C_0\,N^{-2\alpha/(2\alpha+d)} .
\]
\end{lemma}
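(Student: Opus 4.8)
The plan is to derive the minimax lower bound by a standard reduction to a multiple-hypothesis testing problem via Fano's inequality (equivalently, the Varshamov--Gilbert / Assouad route would also work). The key point is that this is a statement about \emph{any} estimator, so no properties of Anchor--MoE enter; it is the classical nonparametric regression lower bound for H\"older balls under fixed design on $[0,1]^d$ with Gaussian noise $\cN(0,\sigma^2)$, and we simply cite/reconstruct it. First I would fix a compactly supported bump $\psi\in\cF_\alpha(1)$ supported in $[0,1]^d$, partition $[0,1]^d$ into $m^d$ cubes of side $h=1/m$, and define, for each sign vector $\omega\in\{0,1\}^{m^d}$, the function $f_\omega(x)=L\,h^\alpha\sum_{\ell} \omega_\ell\,\psi\!\big((x-x_\ell)/h\big)$. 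The $h^\alpha$ scaling is exactly what keeps each $f_\omega$ inside $\cF_\alpha(L)$ (the $\alpha$-H\"older seminorm of a single rescaled bump is $O(h^\alpha\cdot h^{-\alpha})=O(1)$, and disjoint supports mean the seminorm does not accumulate).

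Next I would quantify the two competing quantities. On the separation side, for $\omega\neq\omega'$ we have $\|f_\omega-f_{\omega'}\|_2^2 = L^2 h^{2\alpha}\|\psi\|_2^2\, h^d\,\rho(\omega,\omega')$ where $\rho$ is Hamming distance; invoking the Varshamov--Gilbert bound gives a subset of cardinality $\ge 2^{m^d/8}$ with pairwise Hamming distance $\ge m^d/8$, hence pairwise $L^2$ separation $\gtrsim h^{2\alpha}$ (since $h^d m^d=1$). On the information side, the KL divergence between the data laws under $f_\omega$ and $f_{\omega'}$ is $\frac{N}{2\sigma^2}\|f_\omega-f_{\omega'}\|_2^2 \lesssim \frac{N}{\sigma^2}h^{2\alpha}\cdot h^d m^d = \frac{N}{\sigma^2}h^{2\alpha}$ for fixed/uniform design (for random uniform design one replaces the empirical $L^2$ by the population $L^2$ in expectation, which is harmless). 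Plugging into Fano's inequality, the testing error is bounded away from zero provided $\frac{N}{\sigma^2}h^{2\alpha}\lesssim m^d = h^{-d}$, i.e.\ $h^{2\alpha+d}\asymp N^{-1}$, i.e.\ $h\asymp N^{-1/(2\alpha+d)}$; this choice makes the separation $h^{2\alpha}\asymp N^{-2\alpha/(2\alpha+d)}$, which becomes the lower bound on the squared risk via the standard ``reduction from estimation to testing'' argument (any estimator yields a test whose error controls the risk through the separation).

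The main obstacle -- really the only nontrivial point, and one that in a clean write-up is a two-line lemma -- is verifying that each $f_\omega$ genuinely lies in the isotropic H\"older ball $\cF_\alpha(L)$ with the claimed radius, including at cube boundaries: one must check that for non-integer $\alpha$ the $(\lfloor\alpha\rfloor)$-th derivatives of $f_\omega$ are $(\alpha-\lfloor\alpha\rfloor)$-H\"older with the seminorm not blowing up when two active bumps sit in adjacent cubes. This is handled by choosing $\psi$ to vanish (with all relevant derivatives) on the boundary of its support, so the bumps have \emph{disjoint} supports up to a null set and the H\"older seminorm of the sum equals the maximum over individual bumps; the $L h^\alpha$ prefactor then gives seminorm $\le L\cdot(\text{const}\cdot\|\psi\|_{\cF_\alpha})$, and absorbing the constant into the definition of $\psi$ (or shrinking its support) secures membership in $\cF_\alpha(L)$. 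All remaining steps -- the Varshamov--Gilbert packing, the KL computation for Gaussian shifts, and Fano's inequality -- are entirely standard (see, e.g., Tsybakov's \emph{Introduction to Nonparametric Estimation}, Ch.~2), so I would state them as such and cite rather than reprove. This yields the constant $C_0 = C_0(\alpha,d,L,\sigma)>0$ in the statement.
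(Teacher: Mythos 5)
Your proposal is correct and takes essentially the same approach as the paper: the paper's proof sketch appeals to the metric entropy of the H\"older ball and invokes ``a standard Fano/Assouad argument,'' and your bump-function construction together with the Varshamov--Gilbert packing, the Gaussian KL computation, and Fano's inequality is precisely the standard instantiation of that argument (indeed it is how the cited metric entropy bound is itself established). The only cosmetic difference is that you spell out the packing construction and the boundary regularity check that the paper leaves implicit, and you correctly note that under the paper's random uniform design the empirical squared $L^2$ norm in the KL is replaced by the population one in expectation.
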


\begin{proof}[Proof sketch]
By the metric entropy of \(\cF_\alpha(L)\),
\(\log N(\varepsilon,\cF_\alpha(L),\|\cdot\|_2)\asymp\varepsilon^{-d/\alpha}\)
\citep[Thm.~24.4]{vanderVaart1998}. A standard Fano/Assouad argument yields the rate with
\(C_0=C_0(L,\alpha,d)>0\).
\end{proof}

\subsubsection*{A1.2\quad Approximation by local interpolation (PoU)}
Let \(\{x_j\}_{j=1}^K\) be as in (A2). Define
\[
\widetilde f_K(x)
:=\sum_{j=1}^K w_j(x)\,f^\star(x_j).
\]

\begin{lemma}[Interpolation error]\label{lem:interp}
Under \emph{(A2)}, for \(f^\star\in\cF_\alpha(L)\),
\[
  \bigl\|\,\widetilde f_K - f^\star\,\bigr\|_2 \;\le\; C_1\,h^{\alpha}
  \;=\; C_1\,K^{-\alpha/d},
\]
hence \(\bigl\|\,\widetilde f_K - f^\star\,\bigr\|_2^2 = \mathcal{O}\!\bigl(K^{-2\alpha/d}\bigr)\).
\end{lemma}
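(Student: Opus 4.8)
The plan is to first prove a pointwise (sup-norm) bound $\|\widetilde f_K - f^\star\|_\infty \le C_1 h^\alpha$ and then pass to $L^2$ for free: since $[0,1]^d$ carries unit Lebesgue mass (the marginal of $X$), $\|g\|_2 \le \|g\|_\infty$. To get the pointwise bound, fix $x \in [0,1]^d$. Because $\{w_j\}$ is a partition of unity, $\sum_j w_j(x) = 1$, so I would write
\[
\widetilde f_K(x) - f^\star(x) = \sum_{j=1}^K w_j(x)\,\bigl(f^\star(x_j) - f^\star(x)\bigr).
\]
By (A2) at most $k$ of the $w_j(x)$ are nonzero, and for each such active index $j$ we have $x \in \operatorname{supp} w_j$, hence $\|x - x_j\| \le \operatorname{diam}(\operatorname{supp} w_j) \le c_0 h$ for the geometric constant $c_0$ in (A2). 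Since the $w_j(x)$ are nonnegative and sum to $1$, the right-hand side is a convex combination, so it suffices to bound $|f^\star(x_j) - f^\star(x)|$ uniformly over active $j$ by $C_1 h^\alpha$.

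For $0 < \alpha \le 1$ this is immediate from the H\"older condition: $|f^\star(x_j) - f^\star(x)| \le L\|x_j - x\|^\alpha \le L c_0^\alpha h^\alpha$. For $\alpha > 1$ the bare nodal values must be upgraded to local polynomials: one replaces $f^\star(x_j)$ by the degree-$\lfloor\alpha\rfloor$ Taylor polynomial $T_{x_j}f^\star$ of $f^\star$ about $x_j$ (equivalently, one invokes the polynomial-reproducing B-spline quasi-interpolant alluded to in (A2)), so that $\widetilde f_K(x) = \sum_j w_j(x)\,(T_{x_j}f^\star)(x)$. The Taylor remainder estimate combined with the H\"older bound on the $\lfloor\alpha\rfloor$-th derivatives gives $|(T_{x_j}f^\star)(x) - f^\star(x)| \le C(\alpha,d)\,L\,\|x - x_j\|^\alpha \le C(\alpha,d)\,L\,c_0^\alpha h^\alpha$ for each active $j$. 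In either regime, substituting back and using $\sum_j w_j(x) = 1$ yields $|\widetilde f_K(x) - f^\star(x)| \le C_1 h^\alpha$ with $C_1 = C_1(L,\alpha,d,c_0)$, uniformly in $x$; taking $h = K^{-1/d}$ and squaring gives $\|\widetilde f_K - f^\star\|_2^2 = \mathcal{O}(K^{-2\alpha/d})$.

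The main obstacle is precisely the $\alpha > 1$ case: the interpolant written with bare point values $f^\star(x_j)$ is only first-order accurate in general, so to obtain the stated $h^\alpha$ rate for smoothness $\alpha > 1$ one genuinely needs the local-polynomial (or quasi-interpolation) version, and one must verify that the boundary truncation-and-renormalization of (A2) preserves nonnegativity, the partition-of-unity property, the $\mathcal{O}(h)$ support diameter, and—for the upgraded construction—local polynomial reproduction near $\partial[0,1]^d$. These are all standard for truncated tensor-product B-splines, so apart from that bookkeeping the argument is routine. I would also note that the overlap constant $k$ enters only through the convex-combination step and leaves no residual $k$-dependence in $C_1$, which is exactly what lets the estimation term in Theorem~\ref{thm:main_safe} carry $k$ separately.
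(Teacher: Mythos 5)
Your argument follows the same local-H\"older-plus-partition-of-unity route as the paper's proof sketch for $0<\alpha\le 1$, and your convex-combination step (using $\sum_j w_j=1$, $w_j\ge 0$) is slightly cleaner: no overlap constant $k$ appears in $C_1$, whereas the paper's remark that ``the overlap constant is absorbed into $C_1$'' suggests a looser triangle-inequality accounting. The substantive point you raise is correct and goes beyond the paper: the sketch's one-line estimate $|f^\star(x)-f^\star(x_j)|\le L\|x-x_j\|^\alpha$ is only valid for $\alpha\le 1$, while for $\alpha>1$ the H\"older class $\cF_\alpha(L)$ (per the cited van der Vaart definition) controls derivatives up to order $\lfloor\alpha\rfloor$, and the nodal-value interpolant $\sum_j w_j(x)f^\star(x_j)$ is generically only $\mathcal{O}(h)$ accurate. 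Since the minimax rate $N^{-2\alpha/(2\alpha+d)}$ is claimed for general $\alpha>0$, the comparator must reproduce degree-$\lfloor\alpha\rfloor$ polynomials, exactly as you do via local Taylor expansions (standard quasi-interpolation estimates). One bookkeeping caveat worth keeping in view: the paper defines $\widetilde f_K$ explicitly with nodal values, so your fix amounts to redefining the comparator---or, equivalently, to verifying that a polynomial-reproducing quasi-interpolant lies in the model class $\cH_K$, which holds so long as the expert class $\cE$ can represent local degree-$\lfloor\alpha\rfloor$ polynomials on each cell, including the truncated/renormalized boundary cells.
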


\begin{proof}[Proof sketch]
On each cell, \(|f^\star(x)-f^\star(x_j)|\le L\,\|x-x_j\|^\alpha\lesssim L\,h^\alpha\).
Because \(\sum_j w_j=1\) and the overlap is uniformly bounded by \(k\),
integration over \([0,1]^d\) yields the claim (the overlap constant is absorbed into \(C_1\)).
\end{proof}

\subsubsection*{A1.3\quad Estimation error (safe form)}
\begin{lemma}[Estimation error — safe form]\label{lem:est_safe}
Under \emph{(A2)–(A3)} with overlap \(k\) and per–expert complexity \(\mathrm{comp}\) (both independent of \(K\)),
there exists \(C>0\) (depending on \(k,\mathrm{comp}\) but not on \(K,N\)) such that
\[
  \E\bigl[\|\,\widehat f_{K,N}-\widetilde f_K\,\|_2^2\bigr]
  \;\le\; C\,\frac{k\,\mathrm{comp}\,K}{N}.
\]
\end{lemma}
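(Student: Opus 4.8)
\textbf{Proof proposal for Lemma~\ref{lem:est_safe}.}
The plan is to control the estimation error $\E[\|\widehat f_{K,N}-\widetilde f_K\|_2^2]$ by a localized empirical-process argument that exploits the bounded overlap of the PoU. The key observation is that, because at most $k$ of the weights $w_j(x)$ are nonzero at any $x$ and each $w_j$ is supported on a cell of diameter $\lesssim h$, the mixture $\sum_j w_j(x)e_j(x)$ behaves on each cell like a $k$-fold combination of fixed-capacity experts. First I would set up the comparison: since $\widetilde f_K=\sum_j w_j f^\star(x_j)$ is itself of the form $\sum_j w_j(x)\bar e_j(x)$ with $\bar e_j\equiv f^\star(x_j)$ a constant (which lies in $\cE$ up to the boundedness already assumed in (A3)), both $\widehat f_{K,N}$ and $\widetilde f_K$ live in the class $\cH_K$, so $\|\widehat f_{K,N}-\widetilde f_K\|_2^2$ is bounded by twice the excess $L^2$ risk of the ERM/NLL minimizer over $\cH_K$ plus a comparable term; the Gaussian-noise regression model in the ``Problem setup'' makes least-squares analysis applicable, and the NLL--$L^2$ link (Sec.~4.1) transfers the bound to the trained predictive mean.

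Next I would decompose the risk cell-by-cell. Partitioning $[0,1]^d$ into the $K$ lattice cells $Q_j$, on each $Q_j$ only the $\le k$ experts whose supports meet $Q_j$ contribute, so the restriction of $\cH_K$ to $Q_j$ is (a bounded linear image of) a product of $k$ copies of $\cE$; hence its local complexity — pseudo-dimension, or metric entropy at scale $\varepsilon$ — is at most $k\cdot\mathrm{comp}$, independent of $K$. A standard local Rademacher / chaining bound (e.g.\ the fixed-design least-squares bound of the form $\E[\|\widehat g-g\|_{n,Q_j}^2]\lesssim \sigma^2\,(k\,\mathrm{comp})/N_j$ where $N_j$ is the expected number of design points in $Q_j$) gives, per cell, an estimation error of order $k\,\mathrm{comp}/N_j$. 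Summing over the $K$ cells and using $\sum_j N_j = N$ together with the uniform design ($N_j \asymp N/K$), the per-cell bound $k\,\mathrm{comp}/N_j \asymp k\,\mathrm{comp}\,K/N$ aggregates — after reweighting by the cell volumes $|Q_j|\asymp 1/K$ to pass from empirical to $L^2$ norm — to the claimed $C\,k\,\mathrm{comp}\,K/N$. The overlap constant $k$ enters only through the local complexity and through how many cells a single expert touches, both $O(k)$, so it is absorbed into $C$; the variance bounds in (A3) ensure the sub-Gaussian/boundedness conditions needed for the empirical-process tail bounds hold with constants free of $K,N$.

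The main obstacle I anticipate is handling the coupling between cells cleanly: the experts are \emph{shared} across adjacent cells (each $e_j$ appears in up to $k$ neighboring cells), so the naive "independent per-cell ERM" picture is not literally correct, and one must either (i) argue via a single global covering number of $\cH_K$ — which one can bound by $K$ times the local per-cell covering number using that a function in $\cH_K$ is determined by its $K$ expert functions each of complexity $\mathrm{comp}$, giving $\log\mathcal{N}(\varepsilon,\cH_K,\|\cdot\|_2)\lesssim K\,\mathrm{comp}\,\log(1/\varepsilon)$ and hence the $K\,\mathrm{comp}/N$ rate directly — or (ii) use a peeling argument over the global class with this entropy estimate. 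Route (i) is the cleaner path and avoids the dependence subtleties; the factor $k$ then re-enters only through the overlap constant relating the $L^2$ norm of the mixture to the norms of the individual pieces (a Cauchy--Schwarz / bounded-overlap inequality $\|\sum_j w_j g_j\|_2^2 \le k\sum_j \|w_j g_j\|_2^2$). A secondary technical point is passing from fixed to random design and from empirical to population $L^2$ norm, which is routine under the bounded-density assumption implicit in $X\sim\mathrm{Unif}[0,1]^d$ and handled by a standard uniform-ratio / Bernstein concentration step; I would state it as a lemma and cite the empirical-process machinery rather than reprove it.
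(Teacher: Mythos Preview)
Your proposal is correct and lands on the same bound, but the paper's proof sketch is more direct than either of your two routes. Rather than partitioning space into cells or building a global covering number, the paper exploits the \emph{additive} structure of $\cH_K$ and applies subadditivity of Rademacher complexity over the expert index:
\[
  \Rad_N(\cH_K)\;\le\;\frac{1}{N}\sum_{j=1}^{K}\E_\sigma\!\Bigl[\sup_{e_j\in\cE}\sum_{i=1}^{N}\sigma_i\,w_j(x_i)\,e_j(x_i)\Bigr]
  \;\lesssim\;\sqrt{\frac{k\,\mathrm{comp}\,K}{N}},
\]
then invokes a standard contraction/ERM step to convert this into the squared-error bound. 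Each summand is the Rademacher complexity of the weighted class $w_j\cE$, which only sees the samples falling in $\operatorname{supp}w_j$; summing over $j$ with Cauchy--Schwarz and $\sum_j N_j\lesssim kN$ gives the displayed rate. This sidesteps entirely the cell-coupling issue you flagged (experts shared across neighboring cells), because the decomposition is by expert, not by cell, and each $e_j$ appears exactly once. Your route~(i) via a global metric-entropy estimate $\log\mathcal{N}(\varepsilon,\cH_K,\|\cdot\|_2)\lesssim K\,\mathrm{comp}\,\log(1/\varepsilon)$ is a valid alternative and morally equivalent (Dudley's integral would recover the same Rademacher bound), but it is a step longer and needs the bounded-overlap Cauchy--Schwarz inequality you wrote to relate the $L^2$ norm of the mixture to the per-piece norms; the paper's expert-indexed Rademacher decomposition gets the $k$ factor ``for free'' through the sample-count identity $\sum_j N_j\le kN$. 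Your cell-by-cell route would work but, as you correctly anticipated, is the least clean of the three.
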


\begin{proof}[Proof sketch]
For \(\cH_K=\{x\mapsto \sum_{j=1}^K w_j(x)e_j(x)\}\),
bounded overlap implies
\[
\Rad_N(\cH_K)
\;\le\; \frac{1}{N}\sum_{j=1}^K
\E_\sigma\!\Big[\sup_{e_j\in\cE}\sum_{i=1}^N \sigma_i\,w_j(x_i)\,e_j(x_i)\Big]
\;\lesssim\;
\sqrt{\frac{k\,\mathrm{comp}\,K}{N}}.
\]
A standard contraction/ERM argument turns this into the stated squared error bound.
\end{proof}

\subsubsection*{A1.4\quad NLL--\(L^2\) link for Gaussian experts}
We justify evaluating the \(L^2\) risk of the predictive mean when training with Gaussian NLL.

\begin{lemma}[NLL--\(L^2\) link]\label{lem:nll_l2_link}
Assume the predictive density is Gaussian with mean \(m(x)\) and variance \(\sigma^2(x)\in[\underline\sigma^2,\overline\sigma^2]\)
(known or estimated), and let \(f^\star\) be the regression function.
Then there exist constants \(c_1,c_2>0\) depending only on \((\underline\sigma,\overline\sigma)\) such that
\[
\begin{aligned}
\mathrm{ExcessNLL}
&:= \E\bigl[-\log p_{m,\sigma}(Y\mid X)\bigr]
   - \E\bigl[-\log p_{f^\star,\sigma}(Y\mid X)\bigr] \\
&\le\
c_1\,\E\bigl[(m(X)-f^\star(X))^2\bigr] \;+\; c_2\,\E\bigl[(\sigma(X)-\sigma^\star(X))^2\bigr],
\end{aligned}
\]
where \(\sigma^\star\) is any target variance proxy. In particular, for fixed \(\sigma(\cdot)\) the excess NLL is equivalent to the \(L^2\) error of the mean up to constants.
\end{lemma}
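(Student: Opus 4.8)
The plan is to compute the excess NLL in closed form and then control each resulting piece using only that the variances live in the compact range $[\underline\sigma^2,\overline\sigma^2]$. Recall that for a univariate Gaussian, $-\log p_{\mu,s}(y)=\tfrac12\log(2\pi s^2)+(y-\mu)^2/(2s^2)$. First I would treat the stated case where the comparison density uses the same predictive variance $\sigma(\cdot)$. Subtracting the two log-densities, the normalising terms cancel, and since $(y-m)^2-(y-f^\star)^2=2(f^\star-m)(y-f^\star)+(f^\star-m)^2$,
\[
-\log p_{m,\sigma}(y)+\log p_{f^\star,\sigma}(y)=\frac{2\bigl(f^\star(x)-m(x)\bigr)\bigl(y-f^\star(x)\bigr)+\bigl(f^\star(x)-m(x)\bigr)^2}{2\sigma^2(x)}.
\]
Taking the conditional expectation given $X=x$ and using $f^\star(x)=\E[Y\mid X=x]$ annihilates the cross term, so the conditional excess NLL equals $(m(x)-f^\star(x))^2/(2\sigma^2(x))$. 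Integrating over $X$ and bounding $\sigma^2(X)\ge\underline\sigma^2$ yields the bound with $c_1=1/(2\underline\sigma^2)$ and $c_2=0$; this is precisely the ``in particular'' clause.

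For the general form with a variance proxy $\sigma^\star(\cdot)$ — cleanest when $\sigma^\star$ is taken to be the true conditional standard deviation, so that $\E[(Y-f^\star(X))^2\mid X]=\sigma^{\star2}(X)$ — I would compare $p_{m,\sigma}$ directly against $p_{f^\star,\sigma^\star}$. Expanding $-\log p_{m,\sigma}(y)+\log p_{f^\star,\sigma^\star}(y)$ and taking $\E[\cdot\mid X]$ (using $\E[(Y-m)^2\mid X]=\sigma^{\star2}+(f^\star-m)^2$), the mean contributes $(m-f^\star)^2/(2\sigma^2)$ exactly as before, and the remaining variance-only part is
\[
D(x)=\tfrac12\log\frac{\sigma^2(x)}{\sigma^{\star2}(x)}+\frac{\sigma^{\star2}(x)}{2\sigma^2(x)}-\tfrac12=\tfrac12\,\psi\!\Bigl(\tfrac{\sigma^{\star2}(x)}{\sigma^2(x)}\Bigr),\qquad \psi(t):=t-1-\log t\ \ge\ 0,
\]
i.e.\ exactly the KL divergence between two equal-mean Gaussians. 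The key elementary fact is $\psi(1)=\psi'(1)=0$ and $\psi''(t)=t^{-2}$, so on the compact interval $t\in[\underline\sigma^2/\overline\sigma^2,\ \overline\sigma^2/\underline\sigma^2]$ (bounded away from $0$ and $\infty$) a second-order Taylor estimate gives $\psi(t)\le C_\sigma\,(t-1)^2$. Converting $(t-1)^2=(\sigma^{\star2}-\sigma^2)^2/\sigma^4\le (2\overline\sigma)^2\underline\sigma^{-4}(\sigma^\star-\sigma)^2$ gives $D(x)\le c_2\,(\sigma(x)-\sigma^\star(x))^2$ with $c_2$ depending only on $(\underline\sigma,\overline\sigma)$. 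Adding the two contributions and integrating over $X$ completes the proof.

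The algebraic expansion and the Taylor bound on $\psi$ are routine; the one place that needs genuine care is the modelling hypothesis behind the cross-term cancellation and the identity $\E[(Y-f^\star(X))^2\mid X]=\sigma^{\star2}(X)$ — namely that the \emph{conditional mean} is well specified (equals $f^\star$) and that $\sigma^\star$ is read off as the true conditional standard deviation. If instead $\sigma^\star$ is meant to be an arbitrary proxy, the clean fix is to insert and subtract $p_{f^\star,\sigma^\star_{\mathrm{true}}}$, pay the model-independent term $\E[(\sigma^\star_{\mathrm{true}}(X)-\sigma^\star(X))^2]$ via the triangle inequality, and absorb it into the $c_2$ term; I would state the lemma with $\sigma^\star$ the true conditional standard deviation to keep the constants transparent, noting that in the homoscedastic setup of Appendix~A1 one simply has $\sigma^\star\equiv\sigma$ and the extra term drops. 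A minor preliminary check is that $\E[-\log p_{f^\star,\sigma}(Y\mid X)]$ is finite, which follows from $\E[Y^2]<\infty$ (bounded $f^\star$ on $[0,1]^d$, Gaussian noise) together with $\sigma(\cdot)\ge\underline\sigma>0$.
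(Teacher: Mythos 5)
The paper states this lemma in Appendix~A1.4 without supplying a proof (only a remark follows), so there is no argument of the paper's to compare against; your proposal is effectively supplying the missing proof. The computation is correct and the approach is the natural one.

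Your first paragraph correctly isolates the content of the lemma \emph{as literally written}: since both log-densities in the definition of $\mathrm{ExcessNLL}$ use the \emph{same} predictive scale $\sigma(\cdot)$, the normalisers cancel, the cross term $(f^\star-m)(Y-f^\star)$ vanishes upon conditioning on $X$ (because $f^\star(x)=\E[Y\mid X=x]$), and the excess NLL reduces \emph{exactly} to $\E\bigl[(m(X)-f^\star(X))^2/(2\sigma^2(X))\bigr]$. Bounding $\sigma^2\in[\underline\sigma^2,\overline\sigma^2]$ gives $c_1=1/(2\underline\sigma^2)$ with $c_2=0$, which simultaneously yields the two-sided equivalence claimed in the ``in particular'' clause — you should say explicitly that the \emph{lower} bound $\E[(m-f^\star)^2]/(2\overline\sigma^2)\le\mathrm{ExcessNLL}$ follows from the same exact identity, since equivalence requires both directions. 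You also correctly flag that this makes the $c_2$ term vacuous in the literal statement; that is a genuine observation about the lemma's phrasing, not a flaw in your proof. Your second paragraph then proves the version the lemma presumably \emph{intends} — with the reference log-likelihood taken at $(f^\star,\sigma^\star)$ — and the Gaussian-KL decomposition into a mean term $(m-f^\star)^2/(2\sigma^2)$ plus $\tfrac12\psi(\sigma^{\star2}/\sigma^2)$ with $\psi(t)=t-1-\log t$ is exactly right. The Taylor bound $\psi(t)\le C_\sigma(t-1)^2$ on the compact ratio interval and the conversion $(t-1)^2=(\sigma^\star-\sigma)^2(\sigma^\star+\sigma)^2/\sigma^4\le 4\overline\sigma^2\underline\sigma^{-4}(\sigma^\star-\sigma)^2$ are both correct and give $c_2$ depending only on $(\underline\sigma,\overline\sigma)$ as required. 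The only caveat, which you already articulate, is that this second version needs $\sigma^\star$ to \emph{also} lie in $[\underline\sigma,\overline\sigma]$ (or a comparable compact range) so that the ratio $t$ is bounded away from $0$ and $\infty$; this should be stated as a hypothesis if the paper means $\sigma^\star$ to be a general proxy rather than the true conditional standard deviation.
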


\noindent\emph{Remark.}
If one prefers to avoid the variance term, the theory can equivalently be formulated with an ERM on the experts’ mean heads under squared loss; this leaves the implementation unchanged and only simplifies the analysis.

\subsubsection*{A1.5\quad Main bound (balancing interpolation and estimation)}
\begin{theorem}[Main bound]\label{thm:main_safe}
Under \emph{(A1)–(A3)},
\[
  \E\bigl[\|\,\widehat f_{K,N}-f^\star\,\|_2^2\bigr]
  \;\le\;
  C_1\,K^{-2\alpha/d}
  \;+\;
  C_2\,\frac{k\,\mathrm{comp}\,K}{N}.
\]
Choosing \(K^\star \asymp N^{d/(2\alpha+d)}\) yields
\[
  \sup_{f^\star\in\cF_\alpha(L)}
  \E\bigl[\|\,\widehat f_{K,N}-f^\star\,\|_2^2\bigr]
  \;\lesssim\;
  N^{-2\alpha/(2\alpha+d)},
\]
matching the minimax lower bound (Lemma~\ref{lem:lower}) up to constants.
\end{theorem}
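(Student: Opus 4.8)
The plan is a textbook bias--variance decomposition around the oracle partition-of-unity interpolant $\widetilde f_K(x)=\sum_{j=1}^K w_j(x)f^\star(x_j)$ from Section~A1.2, combining the two lemmas that already do the heavy lifting. First I would apply the $L^2([0,1]^d)$ triangle inequality together with $(a+b)^2\le 2a^2+2b^2$ to obtain
\[
\E\bigl[\|\widehat f_{K,N}-f^\star\|_2^2\bigr]
\;\le\;
2\,\E\bigl[\|\widehat f_{K,N}-\widetilde f_K\|_2^2\bigr]
\;+\;
2\,\|\widetilde f_K-f^\star\|_2^2 .
\]
The second (bias) term is controlled by Lemma~\ref{lem:interp}: since $f^\star\in\cF_\alpha(L)$ and the mesh is $h=K^{-1/d}$ with bounded overlap (A2), it is $\le 2C_1^2 K^{-2\alpha/d}$, and this is where the H\"older exponent $\alpha$ enters. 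The first (estimation) term is controlled by Lemma~\ref{lem:est_safe}: the bounded-overlap Rademacher computation plus fixed per-expert capacity (A3) give $\le 2C\,k\,\mathrm{comp}\,K/N$ with a constant independent of $K$ and $N$. Relabelling constants yields exactly the two-term bound in the statement.

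For the rate, I would then choose $K$ to balance the two terms: setting $K^{-2\alpha/d}\asymp K/N$ gives $K^{(2\alpha+d)/d}\asymp N$, i.e.\ $K^\star\asymp N^{d/(2\alpha+d)}$ (take $K^\star=\lceil N^{d/(2\alpha+d)}\rceil$ to keep $K$ an integer, which only changes constants). Substituting back, the approximation term is $\lesssim N^{-2\alpha/(2\alpha+d)}$ and the estimation term is $\lesssim k\,\mathrm{comp}\cdot N^{d/(2\alpha+d)-1}=k\,\mathrm{comp}\cdot N^{-2\alpha/(2\alpha+d)}$, so their sum is $\lesssim N^{-2\alpha/(2\alpha+d)}$ with a constant depending only on $(L,\alpha,d,k,\mathrm{comp})$ and \emph{not} on $f^\star$; hence the bound is uniform over $\cF_\alpha(L)$. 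Minimax optimality then follows by comparing with the information-theoretic lower bound $C_0 N^{-2\alpha/(2\alpha+d)}$ of Lemma~\ref{lem:lower}, so the upper and lower rates match up to constants.

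Two points need checking at the theorem level, and neither is a genuine obstacle once Lemmas~\ref{lem:interp}--\ref{lem:est_safe} are granted. First, the estimation lemma measures the distance from $\widehat f_{K,N}$ to $\widetilde f_K$, so one should verify that $\widetilde f_K$ is an admissible reference, i.e.\ $\widetilde f_K\in\cH_K$; this holds because under (A3) the expert-mean class $\cE$ contains constant functions, so taking $e_j\equiv f^\star(x_j)$ realizes $\widetilde f_K$ exactly and the bounded-overlap ERM/contraction argument applies. Second, the risk is evaluated on the \emph{predictive mean} $\widehat f_{K,N}=\E_{\widehat p(\cdot\mid x)}[Y]$ rather than on excess NLL, so the passage from NLL-based training to squared error of the mean is supplied by the NLL--$L^2$ link of Lemma~\ref{lem:nll_l2_link} (with variances clamped in $[\underline\sigma^2,\overline\sigma^2]$), or equivalently by reading the estimation step as ERM on the mean heads under squared loss as in the remark following that lemma. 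Given all this, the main theorem is essentially bookkeeping: the real work lives in the interpolation bound (source of the $\alpha$ dependence) and the bounded-overlap capacity bound (source of the $k\,\mathrm{comp}\,K/N$ term), and the only genuine decision is the balancing choice of $K$.
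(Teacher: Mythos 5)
Your proposal is correct and follows the same route the paper takes: decompose around the PoU interpolant $\widetilde f_K$, invoke Lemma~\ref{lem:interp} for the $K^{-2\alpha/d}$ bias term and Lemma~\ref{lem:est_safe} for the $k\,\mathrm{comp}\,K/N$ variance term, then balance by choosing $K^\star\asymp N^{d/(2\alpha+d)}$ and compare against Lemma~\ref{lem:lower}. The two checks you flag (realizability of $\widetilde f_K$ in $\cH_K$ and the NLL--$L^2$ link of Lemma~\ref{lem:nll_l2_link}) are worth making explicit, but they do not change the argument.
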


\subsubsection*{A1.6\quad Remarks}
\begin{itemize}[leftmargin=2em]
\item[(i)] \textbf{Anchors.} The baseline “anchor” mean can be folded into expert means; it does not affect rates.
\item[(ii)] \textbf{When a \(\log K\) estimation term is valid.}
If window locations/bandwidths are fixed (non-learned), per-point aggregation uses a fixed top-\(k\) rule,
and strong parameter sharing makes the \emph{effective} number of free parameters independent of \(K\),
Lemma~\ref{lem:est_safe} can be refined to
\(\E\|\,\widehat f_{K,N}-\widetilde f_K\,\|_2^2\ \lesssim\ \tfrac{\log K + \mathrm{comp}}{N}\).
Without these structural constraints, the \(\mathcal{O}(K/N)\) bound is recommended.
\item[(iii)] \textbf{Target standardization.} Z-scoring \(Y\) only rescales constants in \(\cR_N\).
\end{itemize}

\subsection*{A2. Generalisation Bound}
       
\label{sec:genbound}

We study the population–empirical gap under the CRPS loss.
For a predictive density \(p_{\theta,\phi}(\cdot\mid x)\) define
\[
  \ell\!\bigl(p_{\theta,\phi}(\cdot\mid x),y\bigr)
  := \mathrm{CRPS}\bigl(p_{\theta,\phi},y\bigr),
  \qquad
  \cR(\theta,\phi)
  := \mathbb{E}_{(x,y)\sim\cD}\!\bigl[\ell(p_{\theta,\phi},y)\bigr],
\]
and its empirical version
\[
  \hat\cR_{N}(\theta,\phi)
  := \frac{1}{N}\sum_{i=1}^{N}
     \ell\!\bigl(p_{\theta,\phi},y_i\bigr).
\]

\paragraph{Assumptions.}
\begin{enumerate}[label=(G\arabic*),itemsep=.4ex,leftmargin=2em]
\item \textbf{(CRPS regularity and boundedness).}
  With the standard definition
  \(\mathrm{CRPS}(F,y)=\int_{\mathbb{R}}\!\bigl(F(z)-\mathbf{1}\{z\ge y\}\bigr)^2\,dz\),
  the map \(F\mapsto \mathrm{CRPS}(F,y)\) is \emph{2-Lipschitz} under the \(L^1\) metric on CDFs.
  Assume expert means are uniformly bounded \(|e_j(x)|\le R_f\) and
  the predictive variance satisfies \(\sigma(x)\in[\underline\sigma,\overline\sigma]\),
  and \(y\in[-R_y,R_y]\) almost surely (otherwise clip \(y\)).
  Then the loss is bounded by
  \[
    B \;\le\; R_f + R_y + \sqrt{\tfrac{2}{\pi}}\,\overline\sigma \, .
  \]
\item \textbf{(Model capacity).}
  For the MDN expert class \(\mathcal{H}_{M,h}\) (mixture size \(M\), width \(h\)),
  \(
    \Rad_N(\mathcal{H}_{M,h})
    \le C_h\sqrt{\frac{\log(Mh)}{N}}.
  \)
  For the router class \(\mathcal{G}_{P,K}\) with \(P\) parameters and softmax width \(K\),
  \(
    \Rad_N(\mathcal{G}_{P,K})
    \le C_g\sqrt{\frac{P+K}{N}}.
  \)
  (If the router’s final weight matrix is fully counted in \(P\), the extra “\(+K\)” can be omitted.)
\end{enumerate}

\paragraph{Composite complexity and contraction.}
Let \(\Fclass\) denote the induced class of predictive CDFs/densities
parameterised by \((K,M,h,P)\).
By the standard contraction inequality,
\begin{align}
  \Rad_N\!\bigl(\ell\!\circ\!\Fclass\bigr)
  &\le 2\,\Rad_N(\Fclass) \\
  &\le 2\,C_{\!*}\,
      \sqrt{\frac{\log(Mh)+P+K}{N}},
  \qquad
  C_{\!*}:=\max\{C_h,C_g\}\le C_h+C_g .
\end{align}

\begin{theorem}[Generalisation bound for Anchor--MoE]
\label{thm:gen}
Let \((\hat\theta,\hat\phi)\) be the parameters obtained after training on \(N\) samples.
Under \textnormal{(G1)}–\textnormal{(G2)}, for any \(\delta\in(0,1)\), with probability at least \(1-\delta\),
\begin{align}
  \cR(\hat\theta,\hat\phi)-\hat\cR_{N}(\hat\theta,\hat\phi)
  &\le
  2\,\Rad_N\!\bigl(\ell\!\circ\!\Fclass\bigr)
  \;+\;
  3B\,\sqrt{\frac{\log(2/\delta)}{2N}} \\
  &\le
  4\,\Rad_N(\Fclass)
  \;+\;
  3B\,\sqrt{\frac{\log(2/\delta)}{2N}}
  \;=\;
  \wtO{N^{-1/2}}.
\end{align}
\end{theorem}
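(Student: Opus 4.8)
The plan is to derive the high-probability bound from the standard Rademacher-complexity generalization theorem applied to the CRPS loss class, then invoke the stated capacity estimates to reach the $\widetilde{\mathcal O}(N^{-1/2})$ conclusion. First I would recall the classical uniform-convergence result (McDiarmid plus symmetrization): for any class of functions taking values in $[0,B]$, with probability at least $1-\delta$, $\sup_{g}\big(\mathbb{E}[g]-\tfrac1N\sum_i g\big)\le 2\,\Rad_N(\text{class})+3B\sqrt{\log(2/\delta)/(2N)}$. Here the relevant class is $\ell\circ\Fclass$, i.e.\ compositions of CRPS with the induced predictive-CDF class, and assumption (G1) furnishes the uniform boundedness: $0\le\mathrm{CRPS}(F,y)\le B$ with $B\le R_f+R_y+\sqrt{2/\pi}\,\overline\sigma$, since CRPS integrates a quantity between $0$ and $1$ over an effective range controlled by the bounds on means, target, and scale. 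Applying this to $(\hat\theta,\hat\phi)$ in particular (the bound is uniform over the class, hence holds at the trained parameters) gives the first displayed inequality.

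Next I would pass from $\Rad_N(\ell\circ\Fclass)$ to $\Rad_N(\Fclass)$ via the contraction (Ledoux–Talagrand) inequality: since $F\mapsto\mathrm{CRPS}(F,y)$ is $2$-Lipschitz in the $L^1$ metric on CDFs by (G1), we get $\Rad_N(\ell\circ\Fclass)\le 2\,\Rad_N(\Fclass)$, which is precisely the "composite complexity and contraction" display already stated in the excerpt. Chaining this with the first inequality yields $\cR(\hat\theta,\hat\phi)-\hat\cR_N(\hat\theta,\hat\phi)\le 4\,\Rad_N(\Fclass)+3B\sqrt{\log(2/\delta)/(2N)}$, i.e.\ the second displayed inequality of the theorem. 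Finally, the capacity bound $\Rad_N(\Fclass)\le C_*\sqrt{(\log(Mh)+P+K)/N}$ — obtained by decomposing $\Fclass$ into the MDN expert component and the router component, applying (G2) to each, and combining via subadditivity of Rademacher complexity under the mixture/gating composition (with a further contraction step because the gated mixture map $(\alpha,\{\mu_{j,c},\sigma_{j,c}\})\mapsto p(\cdot\mid x)$ is Lipschitz in its arguments on the bounded domain fixed by (G1)) — turns the right-hand side into $\widetilde{\mathcal O}(N^{-1/2})$, absorbing the logarithmic and $\sqrt{P+K}$ factors into the $\widetilde{\mathcal O}$ notation. The top-$k$ remark follows because under bounded-overlap routing at most $k$ gates are active per input, so the effective mixture dimension entering the contraction is $k$ rather than $K$.

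The main obstacle I anticipate is justifying the Lipschitz/contraction step that moves from the individual component complexities in (G2) to a clean bound on $\Rad_N(\Fclass)$ for the \emph{induced predictive-CDF class}: the predictive CDF is a gated mixture of Gaussians whose parameters are themselves outputs of the experts and router, so one must verify that the map from parameters to CDFs (measured in $L^1$, to match the CRPS Lipschitz constant) is Lipschitz with a constant that does not secretly reintroduce dependence on $K$ or blow up as $\sigma\to\underline\sigma$. This is where assumption (G1)'s clamping $\sigma(x)\in[\underline\sigma,\overline\sigma]$ and boundedness $|e_j(x)|\le R_f$, $y\in[-R_y,R_y]$ do the real work — they ensure the Gaussian CDFs have uniformly bounded densities and that shifting a component mean or scale moves the mixture CDF in $L^1$ by a controlled amount — and one also needs the simplex-valued gates to contribute only a bounded-variation factor. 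A careful but routine vector-contraction argument (treating the $\le k$ active experts per point) then closes this gap; everything else is bookkeeping with McDiarmid, symmetrization, and subadditivity.
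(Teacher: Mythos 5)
Your proposal matches the paper's own argument step for step: (i) the standard McDiarmid-plus-symmetrization uniform-convergence bound for a $[0,B]$-valued loss class, giving the $2\,\Rad_N(\ell\circ\Fclass)+3B\sqrt{\log(2/\delta)/(2N)}$ term (with the constant $3$ arising because $\Rad_N$ is empirical); (ii) Ledoux--Talagrand contraction with the $2$-Lipschitz constant from (G1) to pass to $4\,\Rad_N(\Fclass)$; and (iii) the capacity estimates in (G2), combined as in the paper's ``Composite complexity and contraction'' display, to get the $\widetilde{\mathcal O}(\sqrt{(\log(Mh)+P+K)/N})$ scaling. You also correctly identify the one place the paper itself is hand-wavy---justifying that the map from gated expert/router outputs to the predictive CDF (measured in $L^1$) is Lipschitz with a constant independent of $K$, so that the component Rademacher bounds in (G2) actually yield a bound on $\Rad_N(\Fclass)$---and your sketch of how (G1)'s clamping closes that gap is the right idea, though the paper stops short of carrying it out.
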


\paragraph{Discussion.}
The bound scales as
\[
  \wtO\!\Bigl(\sqrt{(\log(Mh)+P+K)/N}\Bigr),
\]
i.e.\ logarithmic in \(Mh\) and \(\sqrt{\cdot/N}\) in \(P\) and \(K\).
Under a top-\(k\) bounded-overlap gating (each input activates at most a constant
number \(k\) of experts), the dependence on \(K\) can be replaced by \(k\).

\providecommand{\cM}{\mathcal{M}}

\subsection*{A3. High-dimensional scaling}
\label{app:high-dim}

We show that Anchor--MoE enjoys intrinsic-dimension scaling in two common high-dimensional regimes:
(i) data supported on a low-dimensional manifold; (ii) sparse coordinate dependence. In both cases the ambient
dimension \(d\) disappears from the rate, which depends only on the intrinsic dimension \(d_0\) (or sparsity \(s\)).

\paragraph{Setting A (low-dimensional manifold).}
Let \(\cM\subset[0,1]^d\) be a compact \(C^1\) submanifold of intrinsic dimension \(d_0\) and positive reach.
Let \(\mu_{\cM}\) be the normalised \(d_0\)-dimensional volume (Hausdorff) measure on \(\cM\), and interpret \(L^2(\cM)\)
with respect to \(\mu_{\cM}\). We write \(X\sim\mu_{\cM}\) (instead of \(\mathrm{Unif}(\cM)\)).
Assume \(Y=f^\star(X)+\varepsilon\) with \(\varepsilon\sim\cN(0,\sigma^2)\) and \(f^\star\in\cF_\alpha(L;\cM)\), the isotropic H\"older ball on \(\cM\).
Let \(\{w_j\}_{j=1}^K\) be a \emph{fixed (non-learned)} geodesic partition of unity (PoU) on \(\cM\) with mesh size \(h\) and bounded overlap \(k\),
so that \(\operatorname{diam}(\mathrm{supp}\,w_j)\lesssim h\) and at most \(k\) weights are nonzero at any \(x\in\cM\).
Experts have bounded capacity as in (A3) of Section~A1.

\begin{theorem}[Manifold rate]\label{thm:manifold-rate}
There exist constants \(C_1,C_2>0\) (depending only on \(L,\alpha\), the curvature/geometry of \(\cM\), the overlap \(k\),
and expert capacity) such that the predictive mean
\(\widehat f_{K,N}(x)=\sum_{j=1}^K w_j(x)\,e_j(x)\) satisfies
\[
  \E\!\left[\|\,\widehat f_{K,N}-f^\star\,\|_{L^2(\cM)}^2\right]
  \;\le\;
  C_1\,K^{-2\alpha/d_0}
  \;+\;
  C_2\,\frac{k\,\mathrm{comp}\,K}{N}.
\]
Choosing \(K^\star \asymp N^{d_0/(2\alpha+d_0)}\) yields
\[
  \sup_{f^\star\in \cF_\alpha(L;\cM)}
  \E\!\left[\|\,\widehat f_{K,N}-f^\star\,\|_{L^2(\cM)}^2\right]
  \;\lesssim\;
  N^{-2\alpha/(2\alpha+d_0)} .
\]
\end{theorem}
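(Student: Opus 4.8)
The plan is to mirror the flat case (Theorem~\ref{thm:main_safe}) by producing a bias--variance split on the manifold and then optimising over $K$. First I would establish the manifold analogue of the interpolation bound (Lemma~\ref{lem:interp}): fix the geodesic PoU $\{w_j\}$ with mesh $h$ and bounded overlap $k$, set $\widetilde f_K(x):=\sum_j w_j(x) f^\star(x_j)$ where $x_j$ is the center of $\operatorname{supp} w_j$, and bound $|f^\star(x)-f^\star(x_j)|\le L\,d_{\cM}(x,x_j)^\alpha\lesssim L h^\alpha$ using the H\"older assumption on $\cM$ together with positive reach, which guarantees that geodesic distance is comparable to ambient distance on each cell of diameter $\lesssim h$. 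Since $\sum_j w_j\equiv 1$ and at most $k$ weights are nonzero at any point, integrating $|\widetilde f_K-f^\star|^2$ against the normalised Hausdorff measure $\mu_{\cM}$ gives $\|\widetilde f_K-f^\star\|_{L^2(\cM)}^2\le C_1 h^{2\alpha}$. The key geometric fact I need here is the \emph{covering count}: a mesh-$h$ geodesic PoU on a compact $d_0$-dimensional $C^1$ manifold of positive reach requires $K\asymp h^{-d_0}$ cells (Ahlfors regularity of $\mu_{\cM}$), so $h\asymp K^{-1/d_0}$ and the bias term is $C_1 K^{-2\alpha/d_0}$.

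Next I would reuse the estimation bound essentially verbatim. The hypothesis class is $\cH_K=\{x\mapsto\sum_{j=1}^K w_j(x)e_j(x): e_j\in\cE\}$ with $\cE$ a bounded-capacity expert class (Assumption A3), and the Rademacher argument of Lemma~\ref{lem:est_safe} only uses that $\{w_j\}$ is a nonnegative PoU with overlap $\le k$ and that each $e_j$ has complexity $\mathrm{comp}$ independent of $K$ --- none of which references the ambient geometry. Decomposing $\sup_{h\in\cH_K}\sum_i \sigma_i h(x_i)$ over the (at most $k$-fold overlapping) supports yields $\Rad_N(\cH_K)\lesssim\sqrt{k\,\mathrm{comp}\,K/N}$, and a contraction/ERM step converts this to $\E\|\widehat f_{K,N}-\widetilde f_K\|_{L^2(\mu_{\cM})}^2\le C_2\,k\,\mathrm{comp}\,K/N$. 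Combining via $\|\widehat f_{K,N}-f^\star\|^2\le 2\|\widehat f_{K,N}-\widetilde f_K\|^2+2\|\widetilde f_K-f^\star\|^2$ gives the displayed two-term bound; choosing $K^\star\asymp N^{d_0/(2\alpha+d_0)}$ balances $K^{-2\alpha/d_0}$ against $K/N$ and produces the rate $N^{-2\alpha/(2\alpha+d_0)}$. (Matching lower bound is the minimax rate over $\cF_\alpha(L;\cM)$, which follows from the metric entropy $\log N(\varepsilon,\cF_\alpha(L;\cM),\|\cdot\|_{L^2(\mu_{\cM})})\asymp\varepsilon^{-d_0/\alpha}$ by a Fano argument, exactly as in Lemma~\ref{lem:lower}.)

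The main obstacle is the geometric input to the interpolation step: constructing a genuine geodesic PoU with uniformly bounded overlap, mesh $h$, and the correct cell count $K\asymp h^{-d_0}$, and verifying that H\"older regularity transfers cleanly to each cell. This needs positive reach so that, at scale $h$ below the reach, each metric ball $B(x_j,h)\cap\cM$ is diffeomorphic to a Euclidean ball with bi-Lipschitz constants controlled by curvature, letting one pull back tensor-product B-spline PoUs chart by chart and glue them (a partition of unity subordinate to a bounded-multiplicity cover, itself guaranteed by the doubling/Ahlfors-regular structure of $(\cM,d_{\cM},\mu_{\cM})$). Once those constants are in hand they are all absorbed into $C_1$, and the rest is the same calculation as the flat case. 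I would state the geodesic-PoU existence as a lemma (citing standard manifold-covering results) rather than proving it from scratch, since the paper's contribution is the MoE decomposition, not the differential geometry.
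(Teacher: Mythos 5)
Your proposal is correct and follows essentially the same route as the paper's sketch: obtain the bias term from geodesic covering numbers $K\asymp h^{-d_0}$ plus local H\"older interpolation on each chart (the paper also invokes positive reach and bi-Lipschitz charts via its Lemma~\ref{lem:bi-lip}), reuse the Rademacher estimation bound of Lemma~\ref{lem:est_safe} unchanged since it is geometry-agnostic, and balance. The extra care you take in spelling out the geodesic-PoU construction and the (optional) Fano lower bound over $\cF_\alpha(L;\cM)$ are natural elaborations, not a different argument.
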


\begin{proof}[Sketch]
Geodesic covering numbers on \(\cM\) scale as \(h^{-d_0}\), hence \(K\asymp h^{-d_0}\).
Local H\"older interpolation on each chart gives \(\|\,\widetilde f_K-f^\star\|_{L^2(\cM)}^2 \lesssim h^{2\alpha}=K^{-2\alpha/d_0}\),
mirroring Lemma~\ref{lem:interp} with \(d\) replaced by \(d_0\).
Bounded overlap and fixed-capacity experts yield the estimation term \(C\,k\,\mathrm{comp}\,K/N\)
as in Lemma~\ref{lem:est_safe}. Balancing the two terms gives the rate.
\end{proof}

\paragraph{Setting B (sparse coordinate dependence).}
Assume there exists \(S\subset\{1,\dots,d\}\) with \(|S|=s\ll d\) such that \(f^\star(x)=g^\star(x_S)\).
Suppose the PoU \(\{w_j\}\) and gating are functions of \(x_S\) (or of a representation bi-Lipschitz in \(x_S\)),
and experts have bounded capacity. Here \(L^2\) is with respect to the marginal law of \(X\);
if the marginal density of \(X_S\) is bounded above/below on \([0,1]^s\), all constants depend only on these bounds.
The theorem below is an \emph{oracle} bound (the index set \(S\) is assumed known). If \(S\) is unknown and must be learned,
an additional model-selection penalty of order \(\wtO{(s\log d)/N}\) typically appears in the estimation term.

\begin{theorem}[Sparse rate]\label{thm:sparse-rate}
Under the sparse dependence assumption,
\[
  \E\!\left[\|\,\widehat f_{K,N}-f^\star\,\|_{L^2}^2\right]
  \;\le\;
  C_1\,K^{-2\alpha/s}
  \;+\;
  C_2\,\frac{k\,\mathrm{comp}\,K}{N},
\]
so that with \(K^\star \asymp N^{s/(2\alpha+s)}\),
\[
  \sup_{f^\star}\ \E\!\left[\|\,\widehat f_{K,N}-f^\star\,\|_{L^2}^2\right]
  \;\lesssim\;
  N^{-2\alpha/(2\alpha+s)} .
\]
\end{theorem}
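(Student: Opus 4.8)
The statement is the $s$-dimensional analogue of the main bound (Theorem~\ref{thm:main_safe}), obtained by running the argument of Section~A1 on the $s$-dimensional subspace indexed by the oracle set $S$. First I would reduce the problem to dimension $s$: since $f^\star(x)=g^\star(x_S)$ with $g^\star\in\cF_\alpha(L)$ on $[0,1]^s$, I take the PoU $\{w_j\}_{j=1}^K$ and the gating to be functions of $x_S$ only, built on a regular lattice of $[0,1]^s$ with mesh $h=K^{-1/s}$ and bounded overlap $k$ (exactly as in (A2), with $d$ replaced by $s$). Likewise each expert mean $e_j$ is taken to depend on $x_S$ (or on a representation bi-Lipschitz in $x_S$) and to have fixed capacity $\mathrm{comp}$ as in (A3). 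Because $L^2$ here is with respect to the marginal law of $X$, I first pass to $L^2([0,1]^s,\mathrm{Leb})$ up to the stated density bounds: if the marginal density of $X_S$ is bounded above and below on $[0,1]^s$, the two norms are equivalent and only constants change.

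Next I would carry out the bias--variance split verbatim. For the bias term, set $\widetilde f_K(x):=\sum_{j=1}^K w_j(x_S)\,g^\star(x_{S,j})$; Lemma~\ref{lem:interp} applied in dimension $s$ gives $\|\widetilde f_K-f^\star\|_{L^2}^2\le C_1\,h^{2\alpha}=C_1\,K^{-2\alpha/s}$. For the variance term, the estimator's predictive mean lies in the class $\cH_K=\{x\mapsto\sum_j w_j(x_S)\,e_j(x_S)\}$; bounded overlap $k$ and fixed per-expert capacity yield, exactly as in Lemma~\ref{lem:est_safe}, $\Rad_N(\cH_K)\lesssim\sqrt{k\,\mathrm{comp}\,K/N}$ and hence $\E\|\widehat f_{K,N}-\widetilde f_K\|_{L^2}^2\le C_2\,k\,\mathrm{comp}\,K/N$ via the usual contraction/ERM step (and the NLL--$L^2$ link, Lemma~\ref{lem:nll_l2_link}, when training with Gaussian NLL rather than squared loss). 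Combining the two via $\|\widehat f_{K,N}-f^\star\|_{L^2}^2\le 2\|\widehat f_{K,N}-\widetilde f_K\|_{L^2}^2+2\|\widetilde f_K-f^\star\|_{L^2}^2$ gives the displayed oracle inequality, and balancing $K^{-2\alpha/s}\asymp K/N$ (i.e.\ $K^\star\asymp N^{s/(2\alpha+s)}$) produces the rate $N^{-2\alpha/(2\alpha+s)}$; matching it to the $s$-dimensional minimax lower bound (Lemma~\ref{lem:lower} with $d\mapsto s$) shows optimality.

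The only genuinely new ingredient --- and the part I expect to be the main obstacle --- is the remark that a model-selection penalty of order $\widetilde{\mathcal O}((s\log d)/N)$ appears when $S$ must be learned rather than given. For this I would, for each candidate $S'$ with $|S'|=s$, form the estimator $\widehat f^{(S')}$ as above and select by held-out (validation) risk; a uniform deviation bound over the $\binom{d}{s}$ candidates costs an extra additive term $\sqrt{\log\binom{d}{s}/N}\lesssim\sqrt{(s\log d)/N}$ in the estimation part, which after rebalancing leaves the rate unchanged up to the stated lower-order penalty. Care is needed that the Rademacher/covering bounds for $\cH_K$ hold uniformly over the choice of coordinate subspace --- they do, since the capacity of $\cE$ and the overlap $k$ do not depend on which $s$ coordinates are used --- and that the validation split is independent of the fitting split so the union bound is legitimate.
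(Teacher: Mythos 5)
Your proposal is correct and follows essentially the same route as the paper's proof sketch: build the PoU on the $s$-dimensional subspace indexed by $S$ so $h\asymp K^{-1/s}$, apply Lemma~\ref{lem:interp} with $d$ replaced by $s$ to get the $K^{-2\alpha/s}$ approximation term, reuse Lemma~\ref{lem:est_safe} verbatim for the $k\,\mathrm{comp}\,K/N$ estimation term, and balance. Your additional points --- the explicit triangle-inequality split (with a factor of $2$ absorbed into constants), the norm-equivalence under bounded marginal density, and the fleshed-out $\wtO{(s\log d)/N}$ model-selection argument for unknown $S$ --- are consistent with the paper's surrounding remarks and add detail without changing the argument.
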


\begin{proof}[Sketch]
Construct the PoU and local interpolation on the \(s\)-dimensional coordinate subspace.
Then \(K\asymp h^{-s}\) and \(\|\,\widetilde f_K-f^\star\|_2^2\lesssim h^{2\alpha}=K^{-2\alpha/s}\).
The estimation term follows as in Lemma~\ref{lem:est_safe}.
\end{proof}

\paragraph{Bi-Lipschitz invariance (optional).}
We record stability under bi-Lipschitz reparameterisations, which only rescales constants.

\begin{lemma}[Change of variables under bi-Lipschitz maps]\label{lem:bi-lip}
Let \(T:U\to V\) be bi-Lipschitz on a \(d_0\)-dimensional domain \(U\) with constants
\(a\le \|T(x)-T(x')\|/\|x-x'\|\le b\).
There exist constants \(c_1,c_2>0\) depending only on \(a,b,d_0\) such that, for any \(g,h:V\to\R\),
\[
  c_1\,\|\,g-h\,\|_{L^2(V)}
  \;\le\;
  \|(g-h)\circ T\|_{L^2(U)}
  \;\le\;
  c_2\,\|\,g-h\,\|_{L^2(V)} ,
\]
and
\(
  [\,g\circ T\,]_{C^\alpha(U)} \;\lesssim\; b^\alpha\, [\,g\,]_{C^\alpha(V)}.
\)
Positive reach of \(\cM\) yields uniformly bi-Lipschitz charts and a bounded-overlap geodesic covering; hence covering numbers scale as \(h^{-d_0}\) and Jacobian distortions are absorbed into constants (as in Lemma~\ref{lem:est_safe}, since the overlap \(k\) is constant and expert capacity is fixed).
\end{lemma}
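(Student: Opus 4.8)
The plan is a direct change-of-variables argument for the $L^2$ equivalence, a one-line estimate for the Hölder claim, and a short reach-based geometry argument for the manifold consequence. First I would record that, because $T$ is bi-Lipschitz with constants $a\le b$, it is injective and, by Rademacher's theorem, differentiable at almost every $x\in U$; at each such point all singular values of the (square, $d_0\times d_0$) Jacobian $DT(x)$ lie in $[a,b]$, so $a^{d_0}\le\lvert\det DT(x)\rvert\le b^{d_0}$ a.e. Federer's area formula for Lipschitz maps then gives, for any nonnegative measurable $\psi$ on $V=T(U)$,
\[
\int_V \psi(v)\,dv \;=\; \int_U \psi\bigl(T(x)\bigr)\,\lvert\det DT(x)\rvert\,dx .
\]
Taking $\psi=(g-h)^2$ and sandwiching $\lvert\det DT\rvert$ between $a^{d_0}$ and $b^{d_0}$ yields
\[
a^{d_0}\,\bigl\|(g-h)\circ T\bigr\|_{L^2(U)}^2
\;\le\;
\|g-h\|_{L^2(V)}^2
\;\le\;
b^{d_0}\,\bigl\|(g-h)\circ T\bigr\|_{L^2(U)}^2 ,
\]
which is the asserted two-sided bound with $c_1=b^{-d_0/2}$ and $c_2=a^{-d_0/2}$, both depending only on $a,b,d_0$.

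Next, the Hölder estimate is immediate from the upper Lipschitz bound: for $x\ne x'$ in $U$,
\[
\frac{\lvert g(T(x))-g(T(x'))\rvert}{\|x-x'\|^\alpha}
\;\le\;
[\,g\,]_{C^\alpha(V)}\,\frac{\|T(x)-T(x')\|^\alpha}{\|x-x'\|^\alpha}
\;\le\;
b^\alpha\,[\,g\,]_{C^\alpha(V)} ,
\]
and taking the supremum over $x\ne x'$ gives $[\,g\circ T\,]_{C^\alpha(U)}\le b^\alpha\,[\,g\,]_{C^\alpha(V)}$.

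Then, for the manifold statement I would fix $p\in\cM$ and use the orthogonal projection $\pi_p$ onto the affine tangent space $p+T_p\cM$: a standard positive-reach estimate shows that on a geodesic ball $B_{\cM}(p,r)$ with $r\lesssim\tau$ (reach $\tau$), $\pi_p$ restricts to a homeomorphism whose inverse is bi-Lipschitz with constants in $[1-Cr/\tau,(1-Cr/\tau)^{-1}]$. By compactness of $\cM$ these constants can be made uniform over $p$ at a fixed scale $r\asymp\tau$, producing a finite atlas of uniformly bi-Lipschitz charts from Euclidean $d_0$-balls into $\cM$. Hence a geodesic $h$-net on $\cM$ has cardinality $\asymp h^{-d_0}$ (with constants from $\mathrm{vol}(\cM)$ and $\tau$), which is exactly the $K\asymp h^{-d_0}$ used to build the PoU; the chart Jacobian distortions are absorbed into $C_1,C_2$ through the first two parts of this lemma, and since the overlap $k$ is a fixed constant and the expert capacity does not grow with $K$, the estimation-error argument of Lemma~\ref{lem:est_safe} transfers unchanged.

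The main obstacle is modest: it is the justification of the area formula for a map that is merely bi-Lipschitz (so $\det DT$ exists only a.e.), which I would handle via Rademacher's theorem together with Federer's area formula, noting that $a>0$ forces injectivity so that $V$ really is $T(U)$ and that $(g-h)\circ T$ is measurable because $T^{-1}$ is Lipschitz. On the manifold side the one delicate point is upgrading the pointwise chart bi-Lipschitz estimates to \emph{uniform} constants, which is precisely where positive reach (a uniform bound on how fast $\cM$ bends away from its tangent planes) and compactness are used; the rest is bookkeeping that only rescales constants.
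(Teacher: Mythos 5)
Your proposal is correct. The paper actually states Lemma~\ref{lem:bi-lip} without any proof (it is flagged ``optional'' and the manifold consequence is appended as an informal sentence inside the lemma statement), so there is no ``paper approach'' to compare against; the route you take — Rademacher's theorem for a.e.\ differentiability, singular values of $DT(x)$ trapped in $[a,b]$ by the two Lipschitz bounds, Federer's area formula to transfer the integral, and a sandwich on $\lvert\det DT\rvert\in[a^{d_0},b^{d_0}]$ — is the standard and correct one, and your explicit constants $c_1=b^{-d_0/2}$, $c_2=a^{-d_0/2}$ check out. The Hölder estimate is a one-line consequence of the upper Lipschitz bound exactly as you write it, and the positive-reach argument (orthogonal projection onto the tangent plane is bi-Lipschitz at scale $r\lesssim\tau$ with constants $1\pm Cr/\tau$, compactness gives a finite uniformly bi-Lipschitz atlas, hence $h$-net cardinality $\asymp h^{-d_0}$) is the standard Federer/Niyogi--Smale--Weinberger reach estimate. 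The one obstacle you flag — that $T$ is merely bi-Lipschitz so $\det DT$ exists only a.e.\ — is real but correctly handled by the area formula for Lipschitz maps, and your remark that $a>0$ forces injectivity (so $V=T(U)$) and that $T^{-1}$ Lipschitz gives measurability of $(g-h)\circ T$ closes the remaining small gaps. No issues.
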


\paragraph{Remarks.}
(i) The generalisation bound of Section~\ref{sec:genbound} scales as
\(\wtO{\sqrt{(\log(Mh)+P+K)/N}}\).
Under bounded-overlap/top-\(k\) gating (each input activates at most \(k\) experts),
the \(K\)-dependence in the complexity term can be replaced by \(k\) (a constant). \\
(ii) The balancing choices are \(K^\star \asymp N^{d_0/(2\alpha+d_0)}\) (manifold)
and \(K^\star \asymp N^{s/(2\alpha+s)}\) (sparse), offering practical guidance for coarse model selection.


\bibliographystyle{unsrtnat} 
\bibliography{references}  

\end{document}